%% file: main.tex
\documentclass[accepted]{article}

\usepackage{microtype}
\usepackage{graphicx}
\usepackage{dblfloatfix}
\usepackage{subfigure}
\usepackage{booktabs} 
\usepackage{multirow}
\usepackage{pgfplots}
\usepackage{placeins}
\usepackage{wrapfig}
\usepgfplotslibrary{groupplots}
\pgfplotsset{compat=1.17}

\usepackage{hyperref}

\input{macros}
\input{shortcuts}

\usepackage{icml2026}

\usepackage{amsmath}
\usepackage{amssymb}
\usepackage{mathtools}
\usepackage{amsthm}
\usepackage{array}

\usepackage[capitalize,noabbrev]{cleveref}

\theoremstyle{plain}
\newtheorem{theorem}{Theorem}[section]
\newtheorem{proposition}[theorem]{Proposition}
\newtheorem{lemma}[theorem]{Lemma}

\theoremstyle{definition}

\theoremstyle{remark}

\usepackage{tikz}
\usetikzlibrary{shapes.geometric, arrows.meta, positioning, calc, fit, backgrounds}

\icmltitlerunning{Probabilistic Circuits for Irregular Multivariate Time Series Forecasting}

\begin{document}

\twocolumn[
\icmltitle{Probabilistic Circuits for Irregular Multivariate Time Series Forecasting}



\icmlsetsymbol{equal}{*}

\begin{icmlauthorlist}
\icmlauthor{Christian Klötergens}{uhi,darc}
\icmlauthor{Lars Schmidt-Thieme}{uhi,darc}
\icmlauthor{Vijaya Krishna Yalavarthi}{uhi,darc}

\end{icmlauthorlist}

\icmlaffiliation{uhi}{ISMLL, University of Hildesheim, Germany}
\icmlaffiliation{darc}{VWFS, Data Analytics Research Center}

\icmlcorrespondingauthor{Christian Klötergens}{kloetergens@ismll.de}


\vskip 0.3in

]



\printAffiliationsAndNotice{}  
\newcommand{\model}{CircuITS}

\begin{abstract}
    Joint probabilistic modeling is essential for forecasting irregular multivariate time series (IMTS) to accurately quantify uncertainty.
    Existing approaches often struggle to balance model expressivity with consistent marginalization, frequently leading to unreliable or contradictory forecasts. 
    To address this, we propose \textbf{\model{}}, a novel architecture for probabilistic IMTS forecasting based on probabilistic circuits.\@ 
    Our model is flexible in capturing intricate dependencies between time series channels while structurally guaranteeing valid joint distributions. 
    Experiments on four real-world datasets demonstrate that \model{} achieves superior joint and marginal density estimation compared to state-of-the-art baselines.
\end{abstract}
\input{content/intro.tex}
\input{content/related_work.tex}
\input{content/problem.tex}
\input{content/preliminaries.tex}
\input{content/architecture.tex}
\input{content/results.tex}
\section*{Impact Statement}
This work introduces a probabilistic forecasting architecture that structurally guarantees marginalization consistency for irregular multivariate time series. This property significantly enhances reliability in safety-critical domains, such as healthcare and climate science, by preventing contradictory forecasts that can lead to erroneous decision-making.
\bibliographystyle{icml2026}

\bibliography{references}

\newpage
\appendix
\onecolumn
\input{appendix/proof.tex}
\input{appendix/enc_fig.tex}
\input{appendix/circuit_sampling.tex}
\newpage
\input{appendix/parallelizing_circuits.tex}
\newpage
\input{appendix/bifurcation_details.tex}

\newpage
\input{appendix/datasets.tex}
\newpage
\input{appendix/hyperparameters.tex}
\input{appendix/abl_details.tex}
\newpage
\input{appendix/additional_exp.tex}

\input{appendix/scaling.tex}

%
%


\end{document}

%% file: macros.tex
\usepackage{pifont}%

\usepackage{xargs}
\usepackage[colorinlistoftodos,textsize=normalsize]{todonotes} %
\newcommandx{\todoc}[2][1=]{{\todo[linecolor=orange,backgroundcolor=orange!25,bordercolor=orange,#1]{
			TODO: #2}}}
\newcommandx{\unsure}[2][1=]{{\todo[linecolor=yellow,backgroundcolor=yellow!25,bordercolor=yellow,#1]{
			UNSURE: #2}}}
\newcommandx{\change}[2][1=]{{\todo[linecolor=blue,backgroundcolor=blue!25,bordercolor=blue,#1]{
			CHANGE: #2}}}
\newcommandx{\info}[2][1=]{{\todo[linecolor=green,backgroundcolor=green!25,bordercolor=green,#1]{
			INFO: #2}}}
\newcommandx{\improvement}[2][1=]{{\todo[linecolor=violet,backgroundcolor=violet!25,bordercolor=violet,#1]{
			IMPROVEMENT: #2}}}
\newcommandx{\thiswillnotshow}[2][1=]{{\todo[disable,#1]{THIS WILL NOT SHOW:
			#2}}}

\usepackage{bm}

\usepackage{amsmath,amssymb}

\usetikzlibrary{colorbrewer} \usetikzlibrary{positioning}
\usetikzlibrary{shapes.geometric} \usetikzlibrary{arrows}
\usetikzlibrary{arrows.meta} \usetikzlibrary{patterns,decorations.pathreplacing}
\usetikzlibrary{fit} \usetikzlibrary{backgrounds}
\usetikzlibrary{shapes,backgrounds,calc} \usetikzlibrary{shadows}
\usetikzlibrary{patterns} \definecolor{mygray}{RGB}{190,190,190}
\newcommand{\node}{\mathsf{N}}

\newcommand{\spn}{\mathcal{S}}

\newlength{\minnodesize}
\newlength{\nodethickness}
\newlength{\nodedist}
\newlength{\prodnodedist}
\newlength{\leafnodedist}
\definecolor{tab10green}{HTML}{2CA02C}
\definecolor{tab10blue}{HTML}{1f77b4}
\definecolor{tab10red}{HTML}{d62728}

\newcommand{\wroot}{\mathbf{w}^{\text{root}}}

\newcommand{\pro}{\text{Proj}}
\newcommand{\softmax}{\text{softmax}}

\definecolor{profiti_color}{HTML}{356208}  
\definecolor{moses_color}{HTML}{DD7514}    
\definecolor{circuits_color}{HTML}{E916D7} 

%% file: shortcuts.tex
\newcommand{\X}{\mathcal{X}}
\newcommand{\Q}{\mathcal{Q}}
\newcommand{\y}{\mathbf{y}}

\newcommand{\myoplus}{%
	\mathbin{\ooalign{$\bigcirc$\cr\hss\raise0ex\hbox{+}\hss\cr}}%
}
\newcommand{\myotimes}{%
	\mathbin{\ooalign{$\bigcirc$\cr\hss\raise0.1ex\hbox{\small $\times$}\hss\cr}}%
}
\newcommand{\myo}{%
	\mathbin{\ooalign{$\bigcirc$\cr\hss\raise0.1ex\hbox{}\hss\cr}}%
}
\renewcommand{\c}{c}

\newcommand{\R}{\mathbb{R}}

%% file: content/intro.tex
\section{Introduction}
\input{figures/bifurcation.tex}
Machine-learning-based processing of sparse, irregular multivariate time series (IMTS) has recently emerged as an established area of research.
Apart from classification, recent advancements center around point extrapolation and interpolation, where models are tasked with predicting the state of a certain variable, referred to as channel, at a given time based on irregularly sampled observations. 
In most cases, however, point prediction is overly ambitious, since most real-world IMTS are generated by stochastic processes that induce randomness and make precise predictions problematic.  
Even when a process is fully deterministic, the available information about the environment is typically insufficient to infer the unknown value of any variable without uncertainty. 
Therefore, modeling the unknown state of variables as random variables and predicting their distributions is a more useful approach. 
To account for dependencies between variables, it is necessary to predict the joint distribution rather than individual marginals.
Consequently, the probabilistic extrapolation of an IMTS requires predicting a joint distribution
over a flexible number of variables.
 
A critical limitation of the state-of-the-art model in probabilistic IMTS forecasting ProFITi~\cite{Yalavarthi2025.Probabilistic} is the lack of marginalization consistency~\cite{Yalavarthi2025.Reliable}. This flaw leads to contradictory predictions: suppose you query a model for the probability of rain and snow. Intuitively, this prediction should remain identical whether you also ask about temperature, or replace temperature with humidity. However, inconsistent models violate this logic meaning the estimated likelihood is an artifact of the query set rather than a consistent property of the distribution.

Recently MOSES~\cite{Yalavarthi2025.Reliable} was introduced to be a marginalization consistent, but expressive model for probabilistic IMTS forecasting. 
MOSES, however, lags behind ProFITi in estimating accurate conditional joint densities.
While theoretically capable of approximating any distribution, MOSES struggles with the exponential scaling required to represent independent multi-modal processes. As shown in \Cref{fig:bifurcation}, accurately modeling a simple bifurcation across four independent channels requires a mixture of $2^C=16$ components to represent all joint modes ($C$ representing the number of channels). MOSES with 10 components fails due to insufficient capacity; however, even with 16 components MOSES fails to capture the distinct branches despite having sufficient components in principle. The optimization landscape for such high-dimensional mixtures becomes difficult to navigate, leading to poor local minima that fail to represent the ground truth distribution. While one could trivially solve this specific example by manually enforcing independence between channels, real-world data contains a complex, unknown mix of dependencies that cannot be hard-coded upfront.
Hence, there seems to be a trade-off between 
accurate modeling and marginalization consistency.
This is not only a dilemma for practitioners but also unintuitive as marginalization consistency should introduce the correct inductive bias into the model.

Consequently, we require an architecture that avoids this trade-off. 
The ideal model should be capable of representing complex joint dependencies where they exist, while efficiently factoring into independent components where they do not. 
This motivates our proposal of \textbf{Circu}its for \textbf{I}rregular Multivariate \textbf{T}ime \textbf{S}eries (\textbf{\model{}}), a model that leverages the hierarchical nature of Probabilistic Circuits~\cite{Choi2020Probabilistic}. 
By recursively stacking sum nodes (mixtures) and product nodes (independencies), \model{} flexibly models the joint distribution over IMTS forecasting queries. 
This allows it to capture intricate correlations and independent factors alike, while strictly guaranteeing marginalization consistency. 
As a result, \model{} trivially resolves the independent bifurcation task shown in \cref{fig:bifurcation}. 
Furthermore, our experiments on four real-world datasets, featuring varying lookback windows and forecasting horizons, demonstrate that \model{} not only outperforms MOSES on every dataset in terms of joint density but also surpasses ProFITi and establishes a new state-of-the-art.
Our contributions can be summarized as follows\footnote{Code is available here: \url{https://anonymous.4open.science/r/CircuITS-for-IMTS-8EED}}:
\vspace{-0.3cm}
\begin{enumerate}
    \itemsep0cm 
    \item We propose \model{}, a novel architecture to model joint densities of IMTS queries. Unlike previous models, it structurally guarantees marginalization consistency, ensuring valid and non-contradictory predictions for any subset of variables.
    \item We introduce an encoder that handles irregular data to generate encodings of forecasting queries and the weights of the probabilistic circuits.
    \item We demonstrate through extensive experiments on four datasets that \model{} achieves superior joint and marginal density estimation compared to its baselines. 
\end{enumerate}

%% file: figures/bifurcation.tex
\begin{figure*}
    \centering
    \includegraphics[width=1.5in]{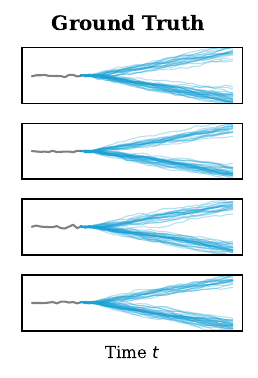}
    \includegraphics[width=1.5in]{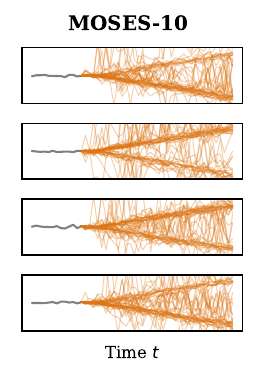}
    \includegraphics[width=1.5in]{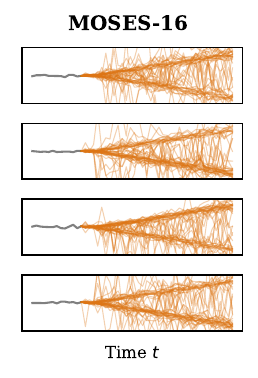}
    \includegraphics[width=1.5in]{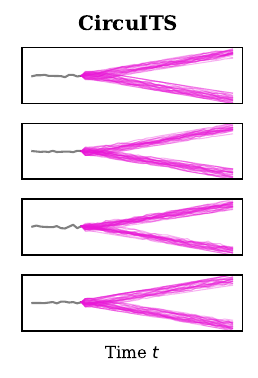}
    \caption{Demonstrating MOSES inability to learn a multivariate bifurcation with 4 independent channels. MOSES-$K$, refers to MOSES with $K$-many mixture components. The plots of \model{} are created by using 2 circuit components.}\label{fig:bifurcation}
\end{figure*}

%% file: content/related_work.tex
\section{Related Work}

Recently published approaches that model IMTS often focus on point forecasting using set/graph based models~\cite{Yalavarthi2024.GraFITi,Klotergens2024.Functional,Zhang2024.Irregular,Luo2025.HiPatch,Li2025.HyperIMTS}.
While effective for deterministic tasks, many downstream applications require quantifying uncertainty.

Early probabilistic approaches for IMTS extrapolation are restricted to predicting the \textit{univariate marginal distributions} (i.e., the distribution of a single channel at a single time step) by modeling a stochastic process with  a Neural ODE~\cite{Chen2018.Neural}. Models such as NeuralFlows \cite{Bilos2021.Neural}, GRU-ODE-Bayes \cite{DeBrouwer2019.GRUODEBayes}, and Continuous Recurrent Units (CRU)~\cite{Schirmer2022.Modeling} fall into this category. 

Modeling the joint distribution allows for capturing dependencies, but it introduces the challenge of \textit{marginalization consistency}. This property is not merely a mathematical constraint; it represents the correct inductive bias for any model approximating a stochastic process~\cite{Oksendal2003.Stochastic, Yalavarthi2025.Reliable}. Models that violate this, such as ProFITi~\cite{Yalavarthi2025.Probabilistic}, lack this inductive bias, leading to contradictory predictions where the forecast for a subset of variables disagrees with the forecast derived from the joint distribution~\cite{Murphy2022.Probabilistic}. Such inconsistencies render models unreliable for critical decision-making in high-stakes domains like healthcare.
While marginalization consistency is essential, current consistent approaches struggle to combine this inductive bias with sufficient expressivity for high-dimensional joint distributions.
Gaussian Process Regression (GPR)~\cite{Durichen2015.Multitask} satisfies consistency but is severely limited by its reliance on Gaussian assumptions.
MOSES (Mixtures of Separable Flows)~\cite{Yalavarthi2025.Probabilistic} attempts to bridge this gap by parameterizing a mixture of multivariate Gaussians combined with separable normalizing flows applied to the univariate marginals.
This architecture guarantees consistency and achieves high expressivity for univariate predictions.
However, its ability to model complex \textit{joint} dependencies is bottlenecked by the mixture model itself.
While Gaussian mixtures are theoretically universal approximators, the amount of data required to accurately estimate an increasing number of components, grows exponentially with the number of variables~\cite{Anderson2014.More}.

TACTiS~\cite{Drouin2022.TACTiS} and TACTiS-2~\cite{Ashok2023.TACTiS2}, employ Transformers to parameterize attentional copulas, achieving state-of-the-art performance on multivariate probabilistic forecasting. However, these models are primarily designed for regularly sampled or fully observed settings and their implementations are not readily adaptable to the sparse irregular setting.
Both TACTiS and TACTiS-2 do not provide structural guarantees to be valid copulas, but implement learning procedures so that the final model parameters hopefully result in valid copulas. 


%% file: content/problem.tex
\section{Preliminaries}
\subsection{Probabilistic IMTS forecasting}~\label{sec:problem_formulation}
We represent an irregular multivariate time series (IMTS) as a sequence of observations $\mathcal{X}$. Unlike regular time series represented as a matrix, $\mathcal{X}$ is a sequence of $N$ triplets:
\begin{align*}
    \mathcal{X} = \Bigl( (t_n, c_n, y_n) \Bigr)_{n=1}^N, 
\end{align*}
with $ (t_n, c_n, y_n) \in \mathbb{R} \times \{1, \dots, C\} \times \mathbb{R}$,
where $t_n$ denotes the continuous timestamp, $c_n$ is the channel index, and $y_n$ is the observed scalar value. The observations are typically sparse and irregularly sampled across time and channels.

\paragraph{Forecasting Query.}
A forecasting task is defined by a query $\mathcal{Q}$, which is a sequence of $|\mathcal{Q}|$ target pairs specifying where predictions are required:
\begin{align*}
    \mathcal{Q} = \Bigl( (t_m^{\text{qry}}, c_m^{\text{qry}}) \Bigr)_{m=1}^{|\mathcal{Q}|},
\end{align*}
with ($t_m^{\text{qry}}, c_m^{\text{qry}}) \in \mathbb{R} \times \{1, \dots, C\}$. The goal is to predict the values $\mathbf{y} = (y_1, \dots, y_{|\mathcal{Q}|})^\top$ corresponding to the query points in $\mathcal{Q}$, conditioned on the observed history $\mathcal{X}$. In general, the order of the sequences $\X$  and $(\Q, \y)$ is of no importance, hence we perceive them often  as sets.

\paragraph{Probabilistic Objective.}
Given the stochastic nature of the underlying processes generating IMTS data, point predictions are often insufficient \citep{Murphy2022.Probabilistic}. Therefore, we aim to model the full conditional joint probability density function over the query targets:
\begin{equation}
    {p}(\mathbf{y} \mid \mathcal{Q}, \mathcal{X}): \mathbb{R}^\mathcal{|Q|} \to \mathbb{R}_{\geq 0}
\end{equation}
This mapping must be valid for any query size $|\mathcal{Q}|$ and any configuration of query points $\mathcal{Q}$.
To ensure the model serves as a reliable approximation of the underlying stochastic process, we impose the following strict requirements:

\textbf{1.) Flexibility and Joint Modelling.} The model must accept variable-length observation sequences $\mathcal{X}$ and queries $\mathcal{Q}$. Crucially, it must model the dependencies between target variables by outputting a joint distribution, rather than factorized independent marginals.

\textbf{2.) Marginalization Consistency.} A valid stochastic process must satisfy the Kolmogorov extension theorem~\cite{Oksendal2003.Stochastic}. 
This implies that the predicted marginal distribution of a query subset must match the integrated joint distribution. 
Let $\y_{S}$ denote a subset of variables that is dropped from $\y$. Then the model fulfills \textit{Marginalization Consistency} if:
\begin{equation}
	\int p(\y \mid \Q, \X) \, \text{d}\y_{S} = p(\y_{-S} \mid \Q_{-S}, \X)
\end{equation}
This property ensures that the prediction for a specific variable does not change arbitrarily simply because we decided to also query another variable~\cite{Yalavarthi2025.Reliable}.

%% file: content/preliminaries.tex
\subsection{Probabilistic Circuits}\label{subsec:pcs_background}
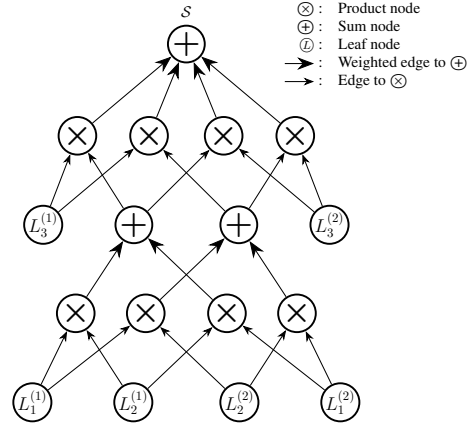
\begin{figure}[t]
	\centering
	\resizebox{0.9\columnwidth}{!}{
		\input{tikz_images/spn.tex}
	}
	\caption{Demonstration of Probabilistic circuit over three variables.}\label{fig:spn_demo}
\end{figure}
A Probabilistic Circuit (PC) is a rooted directed acyclic graph (DAG) (\Cref{fig:spn_demo}) that represents a joint distribution $p(\y)$ over a set of variables $\y$. It is defined by the following node types:

\textbf{Leaf Nodes.} $L_\c^{(i)}$ are the base distributions of the circuit. Each leaf represents a simple, tractable distribution $p^{(i)}(\y_\c)$ over a subset of variables $\y_\c \subseteq \y$, referred to as its scope $sc(L^{(i)}_\c)$. The superscript $(i)$ allows for multiple distinct distributions (e.g., different parameterizations) to be defined over the same variable subset. The total set of variables is partitioned into $C$ disjoint sets $\y = \bigcup_{c=1}^{C} \y_\c$, where $\y_\c \cap \y_{c'} = \emptyset$ for $c \neq c'$.

\textbf{Sum Nodes.} A sum node $n$ represents a mixture model computing a convex combination of its children $ch(n)$:
\begin{equation}
	p_n(\y_{sc(n)}) = \sum_{k \in ch(n)} w_{n,k} \cdot p_k(\y_{sc(k)}),
\end{equation}
where $w_{n,k} \ge 0$ and $\sum_{k \in ch(n)} w_{n,k} = 1$.
To satisfy \textit{Smoothness}, all children must share the same scope: $\forall k \in ch(n), sc(k) = sc(n)$. This ensures the mixture is defined over a consistent domain.

\textbf{Product Nodes.} A product node $n$ represents a factorization of the distribution into independent components:
\begin{equation}
	p_n(\y_{sc(n)}) = \prod_{k \in ch(n)} p_k(\y_{sc(k)})
\end{equation}
The scope of the node is the union of its children's scopes: $sc(n) = \bigcup_{k \in ch(n)} sc(k)$. To satisfy \textit{Decomposability}, the children's scopes must be pairwise disjoint: $\forall k, k' \in ch(n), k \neq k' \implies sc(k) \cap sc(k') = \emptyset$. This property allows tractable marginalization.

A PC is valid if it is both \textit{Decomposable} and \textit{Smooth}. These properties ensure the circuit represents a normalized probability distribution with tractable marginals. For any variable set $\y_\c$ to be marginalized, the corresponding leaf-level distributions $p_L^{(i)}(\y_\c)$ are set to $1$, allowing the integral to effectively pass through the sum and product nodes.

\section{\model{}}
In the context of IMTS, a probabilistic circuit constructed dynamically for a specific combination of observations $\mathcal{X}$ and query $\mathcal{Q}$ is not inherently consistent, despite guaranteeing tractable marginalization.
\begin{proposition} 
A circuit for probabilistic IMTS forecasting $p(\mathbf{y} | \mathcal{Q}, \mathcal{X})$ cannot be Marginalization Consistent if the weights of its sum nodes depend on the query.
\end{proposition}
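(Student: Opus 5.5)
The proposition is informal, so I would first fix a precise reading and then argue by contradiction. The reading I would prove is this: suppose the circuit structure and leaf densities are fixed, and some sum node $n$ has weights $w_{n,k}(\Q)$ that genuinely change when query elements are dropped. Then there exist a query $\Q$ and a subset $S$ for which the consistency equation of \Cref{sec:problem_formulation} fails. Equivalently, consistency for all queries and all admissible leaf parameters forces $w_{n,k}(\Q)=w_{n,k}(\Q_{-S})$ for every $S$. In that case the weights are effectively query-independent.

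\textbf{Step 1: marginalize the circuit for $\Q$.} I would use the tractable-marginalization property from \Cref{subsec:pcs_background}: integrating out $\y_S$ amounts to setting the corresponding leaves to $1$. Smoothness and decomposability let the integral pass through every sum and product node. The result is a circuit over $\y_{-S}$ with the same DAG and the same weights $w(\Q)$. Nodes whose scope lies entirely in $S$ evaluate to $1$ and can be pruned. The left-hand side of the consistency equation is therefore a circuit over $\y_{-S}$ with weights $w(\Q)$.

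\textbf{Step 2: compare with the circuit for $\Q_{-S}$.} The right-hand side is the circuit built directly for $\Q_{-S}$, with the same leaves for the remaining variables but weights $w(\Q_{-S})$. To isolate the effect of the weights, I would consider the simplest witness: a sum node $n$ whose scope still meets $\y_{-S}$ after marginalization, as in the root mixture of \Cref{fig:spn_demo}. Assume the weights on the path above $n$ agree, or take $n$ to be the root. Consistency then reduces to
\begin{equation*}
\sum_{k} w_{n,k}(\Q)\, p_k(\y_{-S}) = \sum_{k} w_{n,k}(\Q_{-S})\, p_k(\y_{-S})
\end{equation*}
for all $\y_{-S}$. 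Here the $p_k$ are the marginalized child distributions.

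\textbf{Step 3: identifiability, the main obstacle.} From the identity above I need to conclude $w_{n,k}(\Q)=w_{n,k}(\Q_{-S})$. This requires the child densities $p_k$ to be linearly independent as functions, and that is not automatic. For instance, if two children coincide, or all leaves in $\y_{-S}$ are identical, the weights are not identifiable, and consistency could hold by accident. I would handle this in two ways. First, I would state the result as holding for generic leaf parameters. Second, I would give an explicit counterexample to make the claim concrete: two variables $y_1,y_2$, a root sum node with two product children $\mathcal{N}(\mu_1,1)\mathcal{N}(\nu_1,1)$ and $\mathcal{N}(\mu_2,1)\mathcal{N}(\nu_2,1)$ with $\mu_1\neq\mu_2$, and weights $w(\Q)\neq w(\Q_{-\{2\}})$. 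Integrating out $y_2$ gives a two-component Gaussian mixture in $y_1$ with weights $w(\Q)$. Since finite Gaussian mixtures with distinct means are identifiable, this cannot equal the directly predicted mixture with weights $w(\Q_{-\{2\}})$. Hence consistency fails whenever the weights respond to the query, which is the claim of the proposition.
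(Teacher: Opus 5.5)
Your proposal takes the same route as the paper's proof. Marginalization passes through the circuit and leaves the weights $\mathbf{w}(\mathcal{Q})$ intact, the directly queried circuit carries $\mathbf{w}(\mathcal{Q}_{-S})$, and the two convex combinations at a sum node therefore differ unless the weights coincide. Your identifiability step also sharpens the paper: it only requires the children $p_k$ to be distinct, which is not enough, since for instance $p_3=\tfrac12(p_1+p_2)$ gives distinct children for which different weight vectors yield the same mixture, whereas linear independence (as in your Gaussian example) is the condition the argument actually needs.
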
 
\begin{proof}
Let $\mathcal{Q}$ be the full query set and $\mathcal{Q}_{-m} = \mathcal{Q} \setminus \{q_m\}$ be the subset. Let $\mathbf{y}$ and $\mathbf{y}_{-m}$ be the corresponding random variables. 
In a valid Probabilistic Circuit, marginalization is performed primarily at the leaves. For any leaf node $l$ with scope containing $\y_m$, integrating out $\y_m$ yields the neutral element $1$ for product nodes, or a marginal leaf density (for multivariate leaves)~\cite{Choi2020Probabilistic}. Crucially, this operation propagates upward without altering the structural parameters of the graph. The marginal distribution $p^{\textbf{orig}}$ is therefore defined by the \textit{original} circuit structure with the \textit{original} weights $\mathbf{w}(\mathcal{Q})$.

Now, consider the density predicted by the model when explicitly queried for the subset $\mathcal{Q}_{-m}$:
$$
p^{\textbf{expl}}(\mathbf{y}_{-m}) = p(\mathbf{y}_{-m} \mid \mathcal{Q}_{-m}, \mathcal{X})
$$
By definition, this model instance is parameterized by weights $\mathbf{w}(\mathcal{Q}_{-m})$. Since the probabilistic circuit is not changed, the elements of $\y_{-m}$ are partitioned into the same leaves as before.

We require
$p^{\textbf{orig}}(\mathbf{y}_{-m})=p^{\textbf{expl}}(\mathbf{y}_{-m})$ for Marginalization consistency.
For a sum node with children distributions $p_1, \dots, p_K$, the output is a convex combination $\sum_k w_k p_k$. If the weights depend on the query such that $\mathbf{w}(\mathcal{Q}) \neq \mathbf{w}(\mathcal{Q}_{-m})$, the two convex combinations will differ, provided the children distributions $p_k$ are distinct.
\begin{equation}
\sum_{k=1}^K w_k(\mathcal{Q}) p_k(\mathbf{y}_{-m}) \neq \sum_{k=1}^K w_k(\mathcal{Q}_{-m}) p_k(\mathbf{y}_{-m})	
\end{equation}
Thus, consistency can only hold if $\mathbf{w}(\mathcal{Q}) = \mathbf{w}(\mathcal{Q}_{-m})$,  for any sum layer weight vector $\textbf{w}$ in the circuit, which implies that these weights cannot depend on $\mathcal{Q}$.
\end{proof}

Furthermore, we argue that the \textit{structure} of the circuit must also remain invariant to the specific multivariate query $\mathcal{Q}$. Consider a hypothetical approach where a Sum-Product Network is constructed dynamically for each $\mathcal{Q}$ (e.g., by clustering the requested univariate marginals ad-hoc). Such an architecture faces two critical failures:
\textbf{1.) Variable Parameter Space:} The number of required mixing weights would vary with the cardinality of the query $|\mathcal{Q}|$. This prohibits the use of standard fixed-size parametric encoders or requires complex, computationally expensive buffering mechanisms.
\textbf{2.) Loss of Semantic Alignment:} Even if variable sizing were managed (e.g., through padding), dynamic structuring breaks the link between weights and semantic meaning. If the inputs to a sum node change arbitrarily based on the query, it becomes impossible to learn stable conditional dependencies. The encoder cannot \emph{know} which random variables are being mixed at any given instance, rendering the learned weights uninformative.

We conclude that the circuit structure should be fixed and aggregate meaningful subsets of variables. To satisfy this, we define the circuit structure based on the \textit{input channels} of the IMTS, ensuring that both the graph topology and the mixing weights retain consistent semantic interpretations regardless of the query. Additionally, we present a Marginalization Consistent method to model the multivariate marginals of input channels including an encoder for the context ($\mathcal{X},\mathcal{Q}$). We name the resulting architecture \textbf{Circu}its for \textbf{I}rregular Multivariate \textbf{T}ime \textbf{S}eries (\textbf{\model{}}) and formally prove it guarantees Marginalization Consistency in \Cref{app:proof}.  

%% file: tikz_images/spn.tex
\begin{tikzpicture}[minimum size=7mm, inner sep=0pt,
	align=center,
	prod/.style={circle,draw,line width=\nodethickness, minimum size=\minnodesize, path picture={
			\draw[];
	}},
	sum/.style={circle, draw, line width=\nodethickness, minimum size=\minnodesize, path picture={
			\draw[];
	}},
	leaf/.style={circle, draw, line width=\nodethickness, minimum
		size=\minnodesize},
	pcedge/.style={thick,{Stealth[scale=1.25]}-, font=\fontsize{16}{0}\selectfont},
	spedge/.style={thick, {Stealth[scale=2]}-,font=\fontsize{16}{0}\selectfont},
	]
	
	\node[sum, label={[font=\fontsize{13}{0}\selectfont]north:$\spn$}] (s1) {\centering \Huge$\boldsymbol{+}$};
	
	\node[prod] (p11) [below left=\nodedist and 0.15\nodedist of s1] {\centering \Huge$\boldsymbol{\times}$};
	\node[prod] (p12) [below right=\nodedist and 0.15\nodedist of s1] {\centering \Huge$\boldsymbol{\times}$};
	\node[prod] (p13) [right=0.5\nodedist of p12] {\centering \Huge$\boldsymbol{\times}$};
	\node[prod] (p10) [left=0.5\nodedist of p11] {\centering \Huge$\boldsymbol{\times}$};

	\node[leaf] (l31) [below left =\prodnodedist and 0.1\prodnodedist of p10] {\centering \Large$L_{3}^{(1)}$};
	\node[leaf] (l32) [below right =\prodnodedist and 0.1\prodnodedist of p13] {\centering \Large$L_{3}^{(2)}$};
	\node[sum] (s21) [right =0.8\nodedist of l31] {\centering \Huge$\boldsymbol{+}$};
	\node[sum] (s22) [left =0.8\nodedist of l32] {\centering \Huge$\boldsymbol{+}$};
	
	\node[prod] (p20) [below left =\prodnodedist and 0.5\prodnodedist of s21] {\centering \Huge$\boldsymbol{\times}$};
	\node[prod] (p23) [below right =\prodnodedist and 0.5\prodnodedist of s22] {\centering \Huge$\boldsymbol{\times}$};
	\node[prod] (p21) [right =0.5\prodnodedist of p20] {\centering \Huge$\boldsymbol{\times}$};
	\node[prod] (p22) [left =0.5\prodnodedist of p23] {\centering \Huge$\boldsymbol{\times}$};

	\node[leaf] (l11) [below left =\prodnodedist and 0.25\prodnodedist of p20] {\centering \Large$L_{1}^{(1)}$};
	\node[leaf] (l12) [below right =\prodnodedist and 0.25\prodnodedist of p23] {\centering \Large$L_{1}^{(2)}$};
	\node[leaf] (l21) [right =\prodnodedist of l11] {\centering \Large $L_{2}^{(1)}$};
	\node[leaf] (l22) [left =\prodnodedist of l12] {\centering \Large $L_{2}^{(2)}$};
	
	\draw[pcedge] (p10) -- (l31);
	\draw[pcedge] (p11) -- (l31);
	\draw[pcedge] (p10) -- (s21);
	\draw[pcedge] (p11) -- (s22);
	\draw[pcedge] (p12) -- (s21);
	\draw[pcedge] (p13) -- (s22);
	\draw[pcedge] (p12) -- (l32);
	\draw[pcedge] (p13) -- (l32);
	\draw[pcedge] (p20) -- (l11);
	\draw[pcedge] (p21) -- (l11);
	\draw[pcedge] (p20) -- (l21);
	\draw[pcedge] (p21) -- (l22);
	\draw[pcedge] (p22) -- (l21);
	\draw[pcedge] (p23) -- (l22);
	\draw[pcedge] (p22) -- (l12);
	\draw[pcedge] (p23) -- (l12);
	
	\draw[spedge] (s21) -- (p20);
	\draw[spedge] (s21) -- (p22);
	\draw[spedge] (s22) -- (p21);
	\draw[spedge] (s22) -- (p23);
	
	\draw[spedge] (s1) -- (p10);
	\draw[spedge] (s1) -- (p12);
	\draw[spedge] (s1) -- (p11);
	\draw[spedge] (s1) -- (p13);
	
%

\node[anchor=north east] at (8,1.2) { \large
	\begin{tabular}{r l}
		$\boldsymbol{\myotimes}$ :& Product node \\
		$\boldsymbol{\myoplus}$ :& Sum node \\
		\textcircled{\small $L$} :& Leaf node \\
		\begin{tikzpicture}[baseline=-0.5ex]
			\draw[spedge] (0.8,0) -- (0,0);
		\end{tikzpicture} :& Weighted edge to $\boldsymbol{\myoplus}$ \\
		\begin{tikzpicture}[baseline=-0.5ex]
			\draw[pcedge] (0.8,0) -- (0,0);
		\end{tikzpicture} :& Edge to $\boldsymbol{\myotimes}$
	\end{tabular}
};

\node[anchor=north west] at (-7,1.2) { 
};

\end{tikzpicture}

%% file: content/architecture.tex
\subsection{Recursive Sum-Product Aggregation}~\label{sec:rec_spn}
To model the multivariate distribution of the forecasting target $\mathbf{y}$, we begin by treating the channels independently before modeling the full joint distribution with probabilistic circuits. For a given query $\mathcal{Q}$, we partition the problem according to the input channels $c \in \{1, \dots, C\}$.
For each channel $c$, we compute a set of $K$ distinct probability density functions (or mixture components), denoted as $\{p_{c, 1}, \dots, p_{c, K}\}$. Each function $p_{c, k}(y_c)$ represents a candidate distribution for the values in channel $c$. These serve as the input nodes (leaves) of our probabilistic circuits.
To capture the joint dependencies between channels without incurring the exponential complexity of a full tensor product (which would scale as $O(K^C)$), we propose a \textit{Recursive Sum-Product Network (SPN)}. This circuit iteratively builds the joint density by alternating between product layers (independence assumptions) and sum layers (mixture projections).
Let $p^{(c)}(\mathbf{y}_{1:c})$ denote the joint probability density over the first $c$ channels. We approximate this density using a mixture of $K$ components, parameterized by a vector of component densities $\mathbf{\Phi}^{(c)}(\mathbf{y}_{1:c}) = [\phi^{(c)}_1, \dots, \phi^{(c)}_K]^\top$. The aggregation proceeds recursively:

\paragraph{Initialization ($c=1$):}
For the first channel, the joint state is simply the vector of leaf densities:
\begin{equation}
    \phi^{(1)}_k(\mathbf{y}_1) = p_{1, k}(y_1), \quad \forall k \in \{1, \dots, K\}
\end{equation}
\paragraph{Recursive Step ($c > 1$):}
To incorporate the next channel $c$, we first form the Cartesian product of the current joint components $\mathbf{\Phi}^{(c-1)}$ and the new channel's leaf densities $\{p_{c, k}\}$. This generates $K^2$ intermediate product terms:
\begin{equation}\label{eq:recurs}
    \psi_{m}(\mathbf{y}_{1:c}) = \phi^{(c-1)}_i(\mathbf{y}_{1:c-1}) \cdot p_{c, j}(y_c)
\end{equation}
where $m$ indexes the pair $(i, j)$ in the flattened space of size $K^2$. This product operation assumes conditional independence between the previous channels and the current channel, given the specific mixture component.

To prevent state explosion, we immediately project these $K^2$ product terms back onto a basis of size $K$ using a weighted mixture (Sum Layer). The $k$-th component of the new joint state is computed as:
\begin{equation}\label{eq:sum_layers}
    \phi^{(c)}_k(\mathbf{y}_{1:c}) = \sum_{m=1}^{K^2} w^{(c)}_{m, k} \cdot \psi_{m}(\mathbf{y}_{1:c}), \quad \text{with} \sum_{m=1}^{K^2} w^{(c)}_{m, k} = 1
\end{equation}
Here, $\mathbf{W}^{(c)} \in \mathbb{R}^{K^2 \times K}$ is a context dependent weight matrix that models the correlations between the previous joint state and the new channel.

\paragraph{Final Joint Distribution:}
After processing all $C$ channels, the final joint likelihood is given by a weighted sum of the final $K$ components using the root weights $\wroot$:
\begin{equation}
    p(\mathbf{y} \mid \mathcal{Q}, \mathcal{X}) = \sum_{k=1}^K w^{\text{root}}_k \cdot \phi^{(C)}_k(\mathbf{y})
\end{equation}
To ensure numerical stability and prevent gradient vanishing or explosion, the actual computation is performed in the log-domain. The likelihoods $p_{c,k}$ are replaced by log-likelihoods $\log p_{c,k}$.

The product operation corresponds to addition in the log-domain, and the sum operation is implemented using the Log-Sum-Exponential (LSE) function. The recursive update rule from Eq.~\ref{eq:recurs} becomes:
\begin{equation}
    \log \phi^{(c)}_k = \mathrm{LSE}\left( \log \mathbf{\Psi} + \log \mathbf{W}^{(c)}_{\cdot, k} \right),
\end{equation}
where $\log \mathbf{\Psi}$ is log-probabilities of the pairwise products. 

\paragraph{Channel Order.} While the sequential structure of the SPN implies that the order theoretically influences the modeled joint density, identifying an optimal order is computationally intractable due to the $C!$ possible permutations. Therefore, we use an arbitrary but fixed ordering rather than attempting to learn the order dynamically.
\input{content/encoder.tex}
\input{content/leaves.tex}

\subsection{Sampling}\label{sec:sampling}
Since \model{} is a valid Sum-Product Network, generation of forecasts is performed via ancestral sampling. This process starts at the root node and traverses the graph downwards to the leaves. Given the recursive structure of our circuit, this top-down traversal corresponds to a backward pass through the channel indices, proceeding from the final channel $C$ down to $1$.
We refer to \Cref{app:sampling_spn} for a more detailed explanation.
Given active component $k$, we sample $z \sim \mathcal{N}(0, \mathbf{R}^{c,k})$, transform to uniform $u = \Phi(z)$, and numerically invert the DSF via bisection to recover $\hat{y}_i = F^{-1}(u_i; \theta_i^k)$~\cite{Drouin2022.TACTiS}.
\subsection{Computational Complexity}\label{sec:complexity}
The computational bottleneck for Gaussian-based joint modeling is the Cholesky decomposition, which scales cubically with the matrix dimension($\mathcal{O}(|\mathcal{Q}|^3)$). In contrast, \model{} decouples inter-channel dependencies and performs matrix operations only at the leaf level for each channel independently. For simplicity, we assume that each channel has the same number of query points $N$, so that $CN = |\mathcal{Q}|$. Then the resulting total complexity of \model{} is $\mathcal{O}(CN^3)$, instead of $\mathcal{O}\bigl( (CN)^3 \bigr)$, reducing the complexity by factor $C^2$. 

However, this efficiency comes at the cost of cubic scaling with respect to the number of mixture components $K$. The recursive aggregation step requires projecting $K^2$ intermediate product terms back onto a basis of size $K$ using a dense mixing matrix $\mathbf{W}^{(c)} \in \mathbb{R}^{K^2 \times K}$. This operation introduces a parameter and computational complexity of $\mathcal{O}(K^3)$ per channel. 
Consequently, we keep $K \leq 4 $ in our experiments. 

The recursive Sum-Product aggregation (\Cref{sec:rec_spn}) can be parallelized using the Hillis-Steele algorithm~\cite{Hillis1986.Data}, which we further explain in \Cref{app:parallel_circuit}. 

%% file: content/encoder.tex
\subsection{Encoder Architecture}\label{sec:encoder}

To model the joint distribution of IMTS forecasting queries with PCs a set of observed triplets needs to be mapped to the circuit weights.
Additionally, \model{} has to learn the leaf distributions based on both the observations and the forecasting queries. 
For the latter task \model{} utilizes embeddings of the forecasting queries that contain the necessary information from the observation history. 
In the following, we present an encoder designed to infer both the PC weights and the embeddings of the forecasting queries. 
\paragraph{Observation Representation.} 
Each observation is a triplet consisting of a continuous timestamp $t_n \in \mathbb{R}$, a channel index $c_n \in \{1, \dots, C\}$, and a scalar value $y_n \in \mathbb{R}$. We project these discrete observations into a latent feature space of dimension $D$ ($\mathbf{H}^{\text{obs}} \in \mathbb{R}^{N \times D}$). The $n$-th observation embedding $\mathbf{h}_n \in \mathbb{R}^D$ is computed as:
\begin{equation}
    \mathbf{h}_n =  \left[ \phi(t_n) \, \| \, \mathbf{E}^{\text{chan}}_{c_n} \, \| \, y_n \right],
\end{equation}
where $\mathbf{E}^{\text{chan}} \in \mathbb{R}^{C \times D}$ is a learnable channel embedding matrix, $\phi(\cdot)$ is a sinusoidal time embedding~\cite{Kazemi2019.Time2Vec}, and $[\cdot \| \cdot]$ denotes concatenation.

Since we aim to construct a PC over the IMTS input channels we argue that it is beneficial to aggregate the observation triplets separately for each channel.
\paragraph{Intra-Channel Aggregation.} 
To aggregate an arbitrary number of observations per channel into a fixed-size representation, we employ Multi-Head Attention (MHA)~\cite{Vaswani2017.Attention}. We introduce a set of static learnable embeddings $\mathbf{E}^{\text{proto}} \in \mathbb{R}^{C \times D}$, where each row represents a channel prototype. These serve as queries in a masked Cross-Attention layer where we use the observation embeddings $\mathbf{H}^{\text{obs}}$ as keys and values:
\begin{equation}
    \mathbf{H}^{\text{chan}} = \mathrm{MHA}\left( \mathbf{E}^{\text{proto}}, \mathbf{H}^{\text{obs}}, \mathbf{H}^{\text{obs}}; \mathbf{M} \right)
\end{equation}
Here, $\mathbf{M}$ is a binary attention mask where $M_{c,n} = 1$ if observation $n$ belongs to channel $c$, and $0$ otherwise. 
This operation effectively \emph{groups} observations by channel, producing a matrix $\mathbf{H}^{\text{chan}} \in \mathbb{R}^{C \times D}$ that summarizes the observations of each time series.
\paragraph{Inter-Channel Interaction.} 
To capture dependencies between variables (e.g., correlations between channels), we update the channel representations using a Self-Attention layer:
\begin{equation}
     \tilde{\mathbf{H}} = \mathbf{H}^{\text{chan}} + \mathrm{MHA}(\mathbf{H}^{\text{chan}}, \mathbf{H}^{\text{chan}}, \mathbf{H}^{\text{chan}})
\end{equation}
The updated states $\tilde{\mathbf{H}} \in \mathbb{R}^{C \times D}$ serve as the context for the subsequent generation tasks. If a channel is completely unobserved, its representation in $\tilde{\mathbf{H}}$ is derived solely from its static query and interactions with other channels.

Applying masked attention to observation triplets to preserve the channel structure in the latent state relates to more complex (hyper-)graph based IMTS models~\cite{Yalavarthi2024.GraFITi,Li2025.HyperIMTS}.  
\paragraph{Embeddings for Leaf Distributions.} 
For a target query at time $t^{\text{qry}}_q$ on channel $c_q$, we generate the context embedding $\mathbf{e}_q$ by retrieving the corresponding row $\tilde{\mathbf{h}}_{c_q}$ from the channels' hidden state and combining it with the query time:
\begin{equation}\label{eq:emb}
    \mathbf{e}_q = \pro^\textsc{qry}\left( \left[ \phi(t^{\text{qry}}_q) \, \| \, \tilde{\mathbf{h}}_{c_q} \right] \right)
\end{equation}
For simplicity we use a single linear layer ($\pro^\textsc{qry}$) to project the concatenation back to dimension $D$.

\paragraph{Circuit Weights:} 
The recursive probabilistic circuit is parameterized via cross-attention on $\tilde{\mathbf{H}}$ using learnable representations $\mathbf{E}^\text{cw} \in \R^{C \times D }$ of the circuit weights:
\begin{equation}~\label{eq:cw_crossattn}
    \mathbf{Z} = \mathrm{MHA}(\mathbf{E}^\text{cw}, \tilde{\mathbf{H}}, \tilde{\mathbf{H}})
\end{equation}
The hidden state of the circuit weights $\mathbf{Z} \in \R^{C \times D }$ is further processed to obtain the $\wroot$ and $[\mathbf{W}^{(c)}]^C_{c=2}$ :
\begin{align}
    \wroot &= \softmax \big( \pro^\textsc{root}(\mathbf{Z}_1) \big) \\
    \mathbf{W}^{(c)} &= \softmax \big( \pro^\textsc{w}(\mathbf{Z}_c )\big)\label{eq:WCfromZ}
\end{align}
Both, $\pro^\textsc{w}: D \to K^2 \times K, $ and $\pro^\textsc{root}: D \to K$ are implemented with linear layers.
The softmax function in Eq.~\ref{eq:WCfromZ} is applied to the first dimension of size $K^2$ to ensure a valid Sum Layers as described in Eq.~\ref{eq:sum_layers}. $\wroot$ and  all $\mathbf{W}^{(c)}$ only depend on $\mathcal{X}$ and not on $\mathcal{Q}$, which is necessary to ensure \model{} is marginalization consistent.
We provide a detailed architecture diagram of our encoder in \Cref{app:enc}.

%% file: content/leaves.tex
\subsection{Leaf Distribution: Marginal Likelihood of Channels}

The probabilistic circuit leaves define $K$ multivariate mixture components over the forecasting targets associated with a specific input channel $c$.
For a given channel $c$, let $\mathbf{y}^{(c)} = (y_1, \dots, y_{Q_c})$ denote the sequence of target values (observations) and $\mathbf{E}^{(c)} = (\mathbf{e}_1, \dots, \mathbf{e}_{Q_c})$ denote the corresponding context embeddings generated by the encoder, where $\mathbf{e}_i \in \R^D$. We split the embeddings into $K$ partitions $\mathbf{e}_i^{k} \in \R^{D^\prime}$, with $D = KD^\prime$, so that each component can infer a marginal distributions based on separate vectors. 

Previous work has shown that Normalizing Flows~\cite{Papamakarios2021.Normalizing} can be applied to the leaves of a probabilistic circuit to enhance expressivity without breaking tractability~\cite{Sidheekh2023.Probabilistic}.
Since we aim for \model{} to be marginal consistent the leaf distributions also need to be marginalization consistent. This is achieved by modeling the univariate marginals with Conditional Deep Sigmoidal Flows (DSF)~\cite{Huang2018.Neural} and combine them with a Gaussian Copula~\cite{Wilson2010.Copula}.
We prove the marginalization consistency in \Cref{app:gc_proof}.
\paragraph{Univariate Marginals.}
The DSF approximates the cumulative distribution function (CDF) via a composition of $L$ monotonic layers, denoted as $F(y_i; \theta_i^k) = f_L \circ \dots \circ f_1(y_i)$. The $l$-th layer operation is defined as:
\begin{equation}
    f_l(x) = \sigma(A_l x + b_l),
    \label{eq:dsf_layer}
\end{equation}
where $\sigma(\cdot)$ is the sigmoid activation function. The parameters $\theta_i^k$ include the weight matrices $A$ and bias vectors $b$ for all layers. To condition the flow on the input context, we use a two-layer MLP~\cite{Haykin1994.Neural} that maps the query embedding $e_i^k$ to the instance-specific flow parameters $\theta_i^k$.

To ensure the strict monotonicity required for a valid normalizing flow, all elements of $A$ are enforced to be strictly positive. The sigmoid activation in the last layer ensures the output lies in $(0, 1)$, effectively modeling the CDF of a continuously defined distribution. The density for the observation $y_i$ is then computed via the change of variables formula:
\begin{equation}
    \log p(y_i | e_i^k) = \log \frac{\partial F(y_i; \theta_i^k)}{\partial y_i}
    \label{eq:dsf_loglik}
\end{equation}
The derivative $\frac{\partial F}{\partial y_i}$ is obtained via a forward-pass application of the chain rule.
\paragraph{Gaussian Copula.}
The Gaussian copula is a statistical construct that models the dependence structure between multiple random variables by mapping their marginal CDF to a multivariate standard normal distribution. It assumes that the dependence between variables is captured entirely by a correlation matrix $\mathbf{R}$, allowing for the decoupling of marginal distributions from the joint dependency structure.
The log-likelihood of a Gaussian copula is defined as:
\begin{equation}
\mathcal{L}^\textsc{GC}(\mathbf{u},R) = -\frac{n}{2} \log |R| - \frac{1}{2} \sum_{i=1}^{n} \mathbf{z}_i^\top (R^{-1} - I) \mathbf{z}_i,
\end{equation}
where, $\mathbf{z}_{i} = \Phi^{-1}(\mathbf{u}_{i})$, represents the inverse CDF of a standard normal distribution~\cite{Nelsen2006.Introduction}.
To apply Gaussian copulas in \model{}' leaf distributions we compute a correlation matrix $\mathbf{R}^{c,k} \in \mathbb{R}^{Q_c \times Q_c}$ for each channel $c$ and mixture component $k$.
First, we project the embeddings into a feature space of dimension $H$ in which we aim to model correlations:
\begin{equation}\label{eq:v}
    \mathbf{v}_i^{c,k} = \tanh(\text{MLP}_{\text{corr}}(\mathbf{e}^k_i)) \quad \in \mathbb{R}^H
\end{equation}
The raw correlation score between query target $i$ and target $j$ is the scaled inner product of their features.
The $\tanh$ activation ensures that any $\mathbf{v}_i^\top \mathbf{v}_j$ is within $(-H,H)$. Therefore, instead of scaling by $1/\sqrt{H}$ which is commonly used in scaled dot product attention, 
we divide by $H$ to ensure $\mathbf{R}^{c,k} \in (-1,1)$.
\begin{equation}\label{eq:Rdot}
    \dot{R}_{ij}^{c,k} = \frac{(\mathbf{v}^{c,k}_i)^\top (\mathbf{v}^{c,k}_j)}{H}
\end{equation}
To ensure a valid correlation structure, we enforce the diagonal to be unity. Let $\mathbf{I}$ be the identity matrix:
\begin{equation}\label{eq:R}
    \mathbf{R}^{c,k} = \underbrace{\dot{R}^{c,k}}_{\text{positive semi-definite}} +  \underbrace{(\mathbf{I} - \text{diag}(\dot{R}^{c,k}))}_{\text{positive definite}}
\end{equation}
We can guarantee that $\mathbf{R}^{c,k}$ is positive definite, since it is a sum of a positive semi-definite Gram matrix ($\dot{R}^{c,k}$) and a diagonal matrix ($(\mathbf{I} - \text{diag}(\dot{R}^{c,k}))$), 
where all values on the diagonal are positive.
With $\mathbf{R}^{c,k}$ we can combine the univariate marginals modeled by DSF to the log-likelihood function of the leaf distribution:
\begin{equation}
    \log p_{c,k} = \underbrace{\sum^{Q_c}_{i=1} \log p(y_i | e_i^k)}_{\text{univariate marginals}} + \underbrace{\mathcal{L}^\textsc{GC}(F(y_i, \theta^k_i),\mathbf{R}^{c,k})}_{\text{copula density}}
\end{equation}
\input{tables/main_table.tex}

%% file: tables/main_table.tex
\begin{table*}
\centering
\caption{Comparison of njNLL and mNLL for different forecasting splits. Lower values are better. Best results are in \textbf{bold}, second best in \textit{italics}. The $\star$ represents that the forecasting horizon includes the next $n$ steps, instead of $n$-many hours/months. For all tasks on the Physionet'12 dataset and the complete 36--3$^\star$ task, we report results from \citet{Yalavarthi2025.Reliable}.}\label{tab:merged_nll}


\setlength{\tabcolsep}{1.3mm} 

\footnotesize

\begin{tabular}{c|lrrrrrrrr}
    \toprule
    & & \multicolumn{2}{c}{USHCN} & \multicolumn{2}{c}{PhysioNet'12} & \multicolumn{2}{c}{MIMIC-III} & \multicolumn{2}{c}{MIMIC-IV} \\
    \cmidrule(lr){3-4} \cmidrule(lr){5-6} \cmidrule(lr){7-8} \cmidrule(lr){9-10}
    & & \multicolumn{1}{c}{njNLL} & \multicolumn{1}{c}{mNLL} & \multicolumn{1}{c}{njNLL} & \multicolumn{1}{c}{mNLL} & \multicolumn{1}{c}{njNLL} & \multicolumn{1}{c}{mNLL} & \multicolumn{1}{c}{njNLL} & \multicolumn{1}{c}{mNLL} \\
    \midrule
    \multirow{4}{*}{\rotatebox{90}{36--3$^\star$}} 
    & NeuralFlows & 0.775 ± .15 & 0.775 ± .18 & 0.496 ± .00 & 0.492 ± .03 & 0.998 ± .18 & 0.866 ± .10 & 0.689 ± .09 & 0.796 ± .05 \\
    & ProFITi & -3.226 ± .23 & \textit{-3.324 ± .21} & \textbf{-0.647 ± .08} & {-0.016 ± .09} & \textit{-0.377 ± .03} & 0.408 ± .03 & \textit{-1.777 ± .07} & \textit{0.500 ± .32} \\
    & MOSES & \textit{-3.357 ± .18} & -3.355 ± .16 & -0.491 ± .04 & \textit{-0.271 ± .03} & -0.305 ± .03 & \textit{0.163 ± .03} & -1.668 ± .10 & -0.634 ± .02 \\
    & \model{} & \textbf{-3.789 ± .22} & \textbf{-3.717 ± .20} & \textit{-0.550 ± .01} & \textbf{-0.287 ± .04} & \textbf{-0.574 ± .08} & \textbf{0.095 ± .11} & \textbf{-2.113 ± .04} & \textbf{-0.731 ± .06} \\
    \midrule
    \multirow{4}{*}{\rotatebox{90}{36--12}} 
    & NeuralFlows & 1.344 ± .13  &  1.344 ± 0.13    &   0.708 ± .05  & 0.709 ± .05  &  1.426 ± .12 & 1.380 ± .13 &  1.033 ± .04  &  1.073 ± .04 \\
    & ProFITi     & \textit{-3.355 ± .20}    &  \textbf{-3.092 ± .40} & \textbf{-0.768 ± .04} & 1.376 ± 1.8 & \textit{0.095 ± .16} & 1.158 ± .26 & -0.511 ± .43 & 0.550 ± .15 \\
    & MOSES       & -1.790 ± .03 & -1.428 ± .98 & -0.315 ± .02 & \textit{-0.083 ± .03} & 0.275 ± .04 & \textit{0.624 ± .08} & \textit{-0.517 ± .03} & \textit{-0.111 ± .06} \\
    & \model{}    & \textbf{-3.755 ± .16} & \textit{-3.072 ± .25} & \textit{-0.536 ± .04} & \textbf{-0.118 ± .03} & \textbf{-0.253 ± .01} & \textbf{0.140 ± .03} & \textbf{-1.201 ± .04} & \textbf{-0.602 ± .04} \\
    \midrule
    \multirow{4}{*}{\rotatebox{90}{24--24}} 
    & NeuralFlows &  1.436 ± .20  &  1.436 ± .20 & 1.097 ± .04  & 1.065 ± .08  &  1.457 ± .06 &  1.324 ± .06  & 1.195 ± .03  & 1.171 ± .03   \\
    & ProFITi     & -1.513 ± .14    &  -0.626 ± .16 & \textit{-0.355 ± .24} & 0.705 ± .18  & \textit{0.196 ± .40} & 1.101 ± .28  & 0.402  ± .21  &  0.609 ± .03   \\
    & MOSES       & \textit{-1.550 ± .65}    &  \textit{-1.917 ± .11} & -0.298 ± .03 & \textit{-0.020 ± .06} & 0.198 ± .01 & \textit{0.700 ± .02} & \textit{-0.478 ± .01} & \textit{-0.028 ± .02} \\
    & \model{}    & \textbf{-3.585 ± .40}    &  \textbf{-2.593 ± .39} & \textbf{-0.523 ± .01} & \textbf{-0.050 ± .01} & \textbf{-0.112 ± .05} & \textbf{0.150 ± .07} & \textbf{-0.851 ± .10} & \textbf{-0.345 ± .10} \\
    \midrule
    \multirow{4}{*}{\rotatebox{90}{12--36}} 
    & NeuralFlows &  1.456 ± .27    &  1.458  ± .27 & 1.436 ± .19  & 1.439 ± .20  & 1.525 ± .05 & 1.449 ± .03 &  1.186 ± .06 & 1.195 ± .04  \\
    & ProFITi     & -1.284 ± .51    &  -0.116 ± .78 & \textit{-0.291 ± .42} & 2.977 ± 3.0 & \textit{0.285 ± .35} & \textit{0.341 ± .25} & -0.272 ± .40 & 0.912 ± .14 \\
    & MOSES       & \textit{-1.519 ± .17}    &  \textit{-1.431 ± .71} &-0.063 ± .05 & \textbf{0.040 ± .13} & 0.362 ± .00 & {0.850 ± .02} & \textit{-0.294 ± .03} & \textit{0.218 ± .00} \\
    & \model{}    & \textbf{-3.400 ± .38}    &  \textbf{-2.254 ± .34} & \textbf{-0.402 ± .05} & \textit{0.162 ± .05} & \textbf{-0.462 ± .07} & \textbf{0.016 ± .05} & \textbf{-0.859 ± .00} & \textbf{-0.302 ± .02} \\
\bottomrule
\end{tabular}
\end{table*}

%% file: content/results.tex
\section{Experiments}~\label{sec:exp}

In our experiments we focus on optimizing the normalized joint Negative Log Likelihood (njNLL) and also evaluate the marginal Negative Log Likelihood (mNLL):
\begin{align}
    \text{njNLL}(y|\mathcal{Q},\mathcal{X}) &= \frac{-\log p(\mathbf{y}|\mathcal{Q},\mathcal{X})}{|\mathcal{Q}|} \\
    \text{mNLL}(y|\mathcal{Q}, \mathcal{X}) &= \frac{\sum_m^{|\mathcal{Q}|} -\log p(y_m|(t^\text{qry}_m,c^\text{qry}_m),\mathcal{X})}{|\mathcal{Q}|} 
\end{align}
\subsection{Independent Multi-channel Bifurcation}
We validate \model{}' capabilities on learning meaningful SPNs on a synthetic bifurcation task (\Cref{fig:bifurcation}).
The synthetic dataset consists of 10,000 simulated IMTS with four channels.
Each IMTS channel simulates a trajectory that starts as stationary Gaussian noise and bifurcates into a biased random walk (Brownian motion with drift) shortly after the final observation.
MOSES fails to capture the distinct modes even with 16 components due to optimization difficulties in high-dimensional mixture spaces. In contrast, \model{} solves the task with 2 components per channel, successfully isolating the upward and downward trends. We report the corresponding njNLLs in \Cref{app:bifurcation}.
\subsection{Benchmark on Real-World Datasets}
Following~\citet{Yalavarthi2025.Reliable}, we evaluate \model{} on four datasets: one climate dataset (USHCN) and three medical datasets (Physionet-2012, MIMIC-III, MIMIC-IV). Our evaluation covers a broad range of varying observation and prediction horizons, ranging from observing 36 hours/months to predict the next 3 steps (36-3$^\star$), to longer-range forecasting scenarios (e.g., 12-36, where the models are tasked to predict the final 36 hours / months). We follow standard splitting protocols (70:10:20) for training, validation, and testing and apply 5-fold cross validation. Additional information about the datasets (Appx.~\ref{app:datasets}) and hyperparameters  (Appx.~\ref{app:hyper}) can be found in the appendix.
\paragraph{Baselines.} In our main experiment we compare against state-of-the-art probabilistic forecasting models for joint density estimation for IMTS forecasting: ProFITi~\cite{Yalavarthi2025.Probabilistic} and MOSES.\@ While MOSES guarantees marginalization consistency ProFITi does not as outlined in previous work.~\cite{Yalavarthi2025.Reliable}. Additionally, we compare against NeuralFlows~\cite{Bilos2021.Neural}, that predicts independent univariate normal distributions. Since it does not model dependencies between variables it is automatically marginalization consistent.  
\paragraph{Results.} The primary goal of probabilistic forecasting is to accurately model the joint distribution over future queries. As shown in \Cref{tab:merged_nll}, \model{} achieves the lowest (best) njNLL in the vast majority of scenarios.
While we train each model on njNLL, we also evaluate it on the mean univariate mNLL. When we compare the mNLL, \model{} outperforms the baselines on majority of scenarios. On most datasets ProFITi has a relatively high mNLL highlighting the importance of marginalization consistency. 
We show an experiment about the impact of channel order in Appx.~\ref{app:order} and evaluate \model{}' efficiency in Appx.~\ref{app:effi}.
\paragraph{Ablations}
In order to validate the effectivity of the Sum-Product Network we evaluate a model (w/o SPN) that combines the channel marginals within a single product without mixing multiple components. This ablation models channels as independent marginals and is less expressive.
Additionally, we also present ablations where we replace the DSF with normal distributions (w/o DSF) and drop the Gaussian Copula (w/o GC). Finally, we also replace our encoder with Tripletformer (TF)~\cite{Yalavarthi2023.Tripletformer}. We provide more details on the ablation set up in \Cref{app:details_abl}. The respective results are shown in \Cref{tab:abl_spn}. 
\input{tables/abl_spn.tex}

\section{Conclusion}
In this work, we introduced \model{}, a novel Sum-Product Network architecture for irregular multivariate time series that resolves the trade-off between expressivity and marginalization consistency. Our experiments demonstrate that \model{} achieves state-of-the-art joint density estimation on real-world datasets while structurally guaranteeing valid, non-contradictory forecasts.

%% file: tables/abl_spn.tex
\begin{table}
\centering
\caption{Showing the njNLL of \model{} compared and with ablations on the 36-12 task.}\label{tab:abl_spn}
\footnotesize
\setlength{\tabcolsep}{0.9mm} 
\begin{tabular}{lrrrr}
    \toprule
     & \multicolumn{1}{c}{USHCN} & \multicolumn{1}{c}{PhysioNet} & \multicolumn{1}{c}{MIMIC-III} & \multicolumn{1}{c}{MIMIC-IV} \\
    \midrule
    \model{}  & \textbf{-3.755 ± .16} & \textbf{-0.536 ± .04}  & \textbf{-0.253 ± .01} & \textbf{-1.201 ± .04} \\
    \midrule
    w/o SPN & -3.527 ± .05 & {-0.299 ± .05} & {0.013 ± .02} & {-0.892 ± .00} \\
    w/o DSF & -1.283 ± .30 & {-0.253 ± .17} & {0.040 ± .03} & {-1.019  ± .05} \\
    w/o GC  & -3.588 ± .08 & -0.303 ± .01   & -0.082 ± .06  & -0.972 ±  .02 \\
    w/ TF   & -2.702 ± .39 & -0.173 ± .19   & -0.167 ± .15  & -1.167 ± .01 \\
    \bottomrule
\end{tabular}
\end{table}

%% file: appendix/proof.tex
\section{Proofs}\label{app:proof}
\subsection{Marginalization Consistency up to the Leaves of \model{}}
\begin{lemma}\label{l:lemma1}A probabilistic circuit (PC) is marginalization consistent estimator up to the leaf nodes:
        \begin{equation*}
		\int p(\y \mid \Q, \X) \text{d}\y_\c = p(\y_{-\c} \mid \Q_{-\c}, \X),
	\end{equation*}
    where $\y_{-\c} = \y \setminus \y_\c$, 
    if it fulfills:
    \begin{itemize}
        \item \textbf{Property 1:} Each leaf node in the PC represents the distribution of variables corresponding to a single input channel
        \item \textbf{Property 2:} The marginal leaf distributions only depend on the query subset ($\mathcal{Q}_c$) that corresponds to the respective channel
        \item \textbf{Property 3:} The weights of the sum nodes do not depend on the query set $\mathcal{Q}$
    \end{itemize}
\end{lemma}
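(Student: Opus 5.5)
We argue by structural induction over the circuit DAG, from the leaves up to the root, and show a stronger claim. For every node $n$ in the circuit built for $(\Q,\X)$, integrating $\y_\c$ out of $p_n$ gives exactly the density that the same node $n$ computes in the circuit built for $(\Q_{-\c},\X)$. On the reduced circuit, every leaf whose scope is channel $\c$ is replaced by the constant $1$, which is its trivial marginal. We fix the (query-independent) circuit structure. Properties 1 and 3 then let us identify nodes between the two instances: each node keeps the same children, and each sum node keeps the same weights $\mathbf{w}(\X)$. Applied to the root, the claim gives $\int p(\y\mid\Q,\X)\,\mathrm{d}\y_\c = p(\y_{-\c}\mid\Q_{-\c},\X)$.

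\emph{Leaves.} By Property 1, each leaf $L^{(i)}_{c'}$ has scope $\y_{c'}$ for a single channel $c'$. If $c'=\c$, the leaf is a normalized density in $\y_\c$, so its integral is $1$. This equals the value assigned to an empty-scope leaf in the explicit query $\Q_{-\c}$, where channel $\c$ has no targets. If $c'\neq\c$, the leaf does not depend on $\y_\c$ and passes through the integral unchanged. By Property 2, its density depends only on $\Q_{c'}$, and $\Q_{c'}$ is identical in $\Q$ and $\Q_{-\c}$. So the leaf coincides with its counterpart in the explicit model. This is the only place Property 2 enters. If one wanted to drop a strict subset of channel $\c$'s targets, one would also need the leaf itself to be marginalization consistent, which is the content of \Cref{app:gc_proof}. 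The present lemma only asks for it "up to the leaves".

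\emph{Sum nodes.} By smoothness, all children share the scope of the node. Linearity of the integral gives $\int \sum_k w_{n,k} p_k \,\mathrm{d}\y_\c = \sum_k w_{n,k}\int p_k\,\mathrm{d}\y_\c$. The induction hypothesis identifies each $\int p_k\,\mathrm{d}\y_\c$ with the explicit child density. By Property 3, $w_{n,k}$ depends only on $\X$, so the result is exactly the explicit sum node's density. This is where the earlier Proposition shows Property 3 is necessary.

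\emph{Product nodes.} By decomposability, $\y_\c$ lies in the scope of at most one child; more precisely, the variables of channel $\c$ in the node's scope fall within a single child, since scopes are unions of whole channels by Property 1. Fubini (everything is nonnegative, so Tonelli suffices) lets the integral act on that child only. The other factors are untouched, and the induction hypothesis finishes the step. For the recursive SPN of \Cref{sec:rec_spn} this is immediate: $\psi_m=\phi_i^{(c-1)}\,p_{c,j}$ has disjoint scopes $\y_{1:c-1}$ and $\y_c$.

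The main obstacle is making the identification between the two circuit instances precise. In particular, we must show that an empty-channel leaf evaluating to $1$ is really what the model computes for $\Q_{-\c}$. I would handle this by an explicit convention: a channel with no query targets contributes the constant density $1$ at its leaves while keeping all weights. The root identity then follows by the induction.
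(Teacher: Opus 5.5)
Your proof is correct and follows the same route as the paper. Both use a bottom-up structural induction over leaf, sum and product nodes, with the same claim that integrating $\mathbf{y}_c$ out of a node's density gives that node's density in the model queried with $\mathcal{Q}_{-c}$. Both rely on linearity plus Property 3 at sum nodes and on decomposability plus Fubini--Tonelli at product nodes.

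Your version is slightly more careful than the paper's in three places the paper leaves implicit. You explicitly invoke Property 2 for leaves of the other channels, and you observe that node scopes are unions of whole channels, so exactly one child of a product node can contain $\mathbf{y}_c$. You also state outright the convention that a channel with no targets contributes a constant-$1$ leaf.
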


\begin{proof}[Proof of \Cref{l:lemma1}]
	
Let $n$ be a node in the Probabilistic Circuit representing the distribution $p_n(\y_{sc(n)} \mid \Q, \X)$.
We want to prove that for any channel $y_c \subseteq sc(n)$:
\[
\int p_n(\y_{sc(n)} \mid \Q, \X) \, d\y_\c = p_n(\y_{sc(n) \setminus \c} \mid \Q_{-\c}, \X)
\]
\paragraph{If $n$ is a Leaf Node.}
If $n$ is a leaf node $L_c^{(i)}$, its scope is $\y_\c$ (Property 1). By the definition of a probability distribution, integrating over its entire domain yields $1$:
\[\int p_L^{(i)}(\y_\c \mid \Q_c, \X) \, d\y_\c = 1\]
This echos with the principle of PC that the marginalization of a node is by replacing its distribution with $1$. The resulting $1$ is the identity element for product nodes, effectively \emph{removing} the channel from the joint factorization.

If $sc(n) \cap \y_\c = \emptyset$, the leaf is independent of the marginalization variable, and is therefore not affected.

\paragraph{If $n$ is a Product Node.}
By decomposability, $y_\c$ exists in the scope of exactly one child, say child $k'$. 
\[
p_n(\y_{sc(n)}\mid \Q, \X) = p_{k'}(\y_{sc(k')} \mid \Q, \X) \cdot \prod_{k \neq k'} p_k(\y_{sc(k)} \mid \Q, \X)
\]
Integrating over $\y_\c$:
\[
\int \left( p_{k'}(\y_{\c(k')} \mid \Q, \X) \prod_{k \neq k'} p_k(\y_{sc(k)} \mid \Q, \X) \right) d\y_\c = \left( \int p_{k'}(\y_{sc(k')} \mid \Q, \X) d\y_\c \right) \prod_{k \neq k'} p_k(\y_{sc(k)} \mid \Q, \X)
\]
If $sc(k') = \y_\c$, the integral is $1$, effectively removing channel $c$ from the product.
\paragraph{If $n$ is a Sum Node.} 
By Smoothness (see \Cref{subsec:pcs_background}), all children share the same scope. By Property 3, weights $w_{n,k}$ are independent of $\y$ and $\Q$.
\[
\int \left( \sum_{k \in ch(n)} w_{n,k}(\X) \cdot p_k(\y_{sc(n)} \mid \dots) \right) d\y_\c = \sum_{k \in ch(n)} w_{n,k}(\X) \int p_k(\y_{sc(n)} \mid \dots) d\y_\c
\]
Each integral $\int p_k(\y_{sc(n)}) d\y_\c$ yields the marginal $p_k(\y_{sc(n)\setminus c})$. Since the weights $w_{n,k}(\X)$ remain unchanged, the node $n$ now represents a valid mixture of the marginalized distributions of its children.

Since the property holds for the leaves and is preserved by both product and sum operations, it holds for the root of the circuit.
By repeating this process for any number of channels $c \in \{1, \dots, C\}$, we prove that the PC is marginalization consistent across all channel-level subsets.
\end{proof}

\input{appendix/gc_proof.tex}

\subsection{Proof of \model{}' Marginalization Consistency}
\model{} is a marginalization consistent model for conditional joint density estimation of irregular multivariate time series forecasting. Formally, for any subset of variables $\y_A \subset \y$:
	\begin{equation*}
		\int p(\y \mid \Q, \X) \, d\y_A = p(\y \setminus \y_A \mid \Q \setminus \Q_A, \X)
	\end{equation*}

\begin{proof}
	Let $S$ be an arbitrary set of indices, and let $S_c = S \cap \mathcal{I}_c$ denote the indices belonging to channel $c$, such that $S = \bigcup_{c=1}^C S_c$.
	
	From \Cref{l:lemma1}, we have established that the circuit structure is marginalization-consistent up to the leaf nodes. The integral operator distributes through the sum and product nodes such that the global integral is determined by the local integrals at the leaves:
	\[
	\int p(\y \mid \Q, \X) d\y_{S_c} = \text{PC}\left( \dots, \int p_L^{(i)}(\y_\c \mid \Q_\c, \X) d\y_{S_\c}, \dots \right)
	\]
	where $\text{PC}(\cdot)$ represents the original circuit topology with fixed weights $w_{n,k}(\X)$ (Property 3 from \Cref{l:lemma1}).
	
	By \Cref{prop:gc}, each leaf node is itself marginalization consistent over its subset $S_\c$:
	\[
	\int p_L^{(i)}(\y_\c \mid \Q_\c, \X) d\y_{S_c} = p_L^{(i)}(\y_{\c \setminus S_c} \mid \Q_{\c \setminus S_\c}, \X)
	\]
	Substituting this into the circuit, the leaf nodes are replaced by their corresponding marginal distributions over the remaining queries $\Q_{c\setminus S_c}$. By repeating this process for all $c$, the entire circuit represents the valid marginal distribution $p(\y_{-S} \mid \Q_{-S}, \X)$.
\end{proof}

%% file: appendix/gc_proof.tex
\subsection{\model{}'s Leaves are Marginalization Consistent}~\label{app:gc_proof}
\begin{proposition}[Marginalization Consistency of \model{}' Leaf Distributions]\label{prop:gc}
Let $\mathcal{Q}_c \subseteq \mathcal{Q}$ be the subset of forecasting queries belonging to a specific channel $c$, and let $\mathbf{y}_c$ be the vector of target variables corresponding to $\mathcal{Q}_c$.

Consider an arbitrary query index $m$ within this channel's set $\mathcal{Q}_c$. Let $\mathcal{Q}_{c \setminus m}$ denote the query set with the $m$-th query point removed, and $\mathbf{y}_{c \setminus m}$ denote the target vector without $y_m$. The leaf distribution for channel $c$ is marginalization consistent if:
\begin{equation*}
    \int p(\mathbf{y}_c \mid \mathcal{Q}_c, \mathcal{X}) \, dy_m = p(\mathbf{y}_{c \setminus m} \mid \mathcal{Q}_{c \setminus m}, \mathcal{X})
\end{equation*}
By induction, satisfying this condition implies consistency for any subset of variables within the channel.
\end{proposition}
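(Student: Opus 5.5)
We prove that each leaf $p_{c,k}$, i.e.\ the Gaussian copula over DSF marginals, is marginalization consistent. The plan is to rewrite the leaf density as the pushforward of a multivariate Gaussian under coordinatewise monotone maps. We then use two facts: Gaussian marginals are again Gaussian, with the corresponding principal submatrix of the correlation matrix, and every parameter of the leaf is computed per query point.

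\textbf{Step 1 (per-query parametrisation).} From Eq.~\eqref{eq:emb}, $\mathbf{e}_i$ depends only on $(t_i^{\text{qry}}, c)$ and on $\tilde{\mathbf{H}}$. The latter is a function of $\X$ alone. It follows that the DSF parameters $\theta_i^k$ and the correlation features $\mathbf{v}_i^{c,k}$ in Eq.~\eqref{eq:v} depend only on query $i$ and $\X$, and not on the other queries in $\Q_c$. By Eqs.~\eqref{eq:Rdot}--\eqref{eq:R}, each entry $R^{c,k}_{ij}$ depends only on the pair $(i,j)$: off the diagonal it is $\mathbf{v}_i^\top\mathbf{v}_j/H$, and on the diagonal it is $1$. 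Hence the correlation matrix built for $\Q_{c\setminus m}$ is exactly the principal submatrix $\mathbf{R}_{-m,-m}$ of the matrix built for $\Q_c$. Likewise, the marginals $F(\cdot;\theta_i^k)$ for $i\neq m$ are unchanged.

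\textbf{Step 2 (generative representation).} Write $z_i=\Phi^{-1}(F(y_i;\theta_i^k))$. This map is a strictly increasing bijection $\mathbb{R}\to\mathbb{R}$, since the DSF has positive weights and sigmoid output. Its derivative is $p(y_i\mid e_i^k)/\varphi(z_i)$, where $\varphi$ is the standard normal density. The change-of-variables formula then shows
\[
\log p_{c,k}(\mathbf{y}_c)=\log\mathcal{N}(\mathbf{z};0,\mathbf{R})-\sum_i\log\varphi(z_i)+\sum_i\log p(y_i\mid e_i^k).
\]
This agrees with the stated log-likelihood $\mathcal{L}^\textsc{GC}$ plus the DSF marginal terms, with $n=1$: a single sample vector $\mathbf{z}$ and the usual $-\tfrac12\log|R|$ convention. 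So $p_{c,k}$ is the density of $\mathbf{y}$ with $y_i=F^{-1}(\Phi(z_i);\theta_i^k)$ and $\mathbf{z}\sim\mathcal{N}(0,\mathbf{R}^{c,k})$.

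\textbf{Step 3 (integrate out $y_m$).} The transformation acts coordinatewise, so integrating $y_m$ out of the density of $\mathbf{y}$ equals the density of the pushforward of the $z_m$-marginal. Equivalently, substitute $y_m\mapsto z_m$ inside the integral; the Jacobian factor cancels exactly. The marginal of $\mathcal{N}(0,\mathbf{R})$ over the remaining coordinates is $\mathcal{N}(0,\mathbf{R}_{-m,-m})$. Pushing this forward through the unchanged maps for $i\neq m$ gives a copula leaf with correlation $\mathbf{R}_{-m,-m}$ and marginals $\theta_i^k$. By Step 1, this is precisely the leaf the model outputs for $\Q_{c\setminus m}$. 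Each mixture component $k$ is handled separately, which is all the leaf-level statement requires; the sum weights are treated by \Cref{l:lemma1}.

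The main obstacle is Step 2: checking that the paper's written copula expression is exactly the Gaussian pushforward density, including the $\log|R|$ normalisation and the $-\tfrac12\mathbf{z}^\top(R^{-1}-I)\mathbf{z}$ term. Once the leaf is recognised as a transformed Gaussian, Step 3 is the standard Gaussian marginalisation, and Step 1 is a bookkeeping check of the encoder's dependencies. The induction over removed indices is immediate, as noted in the statement.
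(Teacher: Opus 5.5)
Your proof is correct and follows the same route as the paper's proof. Both arguments combine two facts. First, marginalizing a Gaussian-copula leaf over $y_m$ yields the copula with the $m$-th row and column of $\mathbf{R}^{c,k}$ removed. Second, this coincides with the leaf the model outputs for $\mathcal{Q}_{c\setminus m}$, because each $\mathbf{e}_i$ (hence $\theta_i^k$ and $\mathbf{v}_i^{c,k}$) depends only on $q_i$ and $\mathcal{X}$.

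The paper simply cites the copula marginalization property as standard. Your Steps 2--3 prove it via the Gaussian pushforward representation, including the check against $\mathcal{L}^{\textsc{GC}}$. You also state explicitly that the DSF marginals for $i\neq m$ are unchanged, which the paper leaves implicit.
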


\begin{proof}
The leaf distribution for a channel $c$ and mixture component $k$ is defined as a Gaussian Copula combined with univariate Deep Sigmoidal Flows (DSF)~\cite{Nelsen2006.Introduction}. It is a standard property of the Gaussian Copula that marginalizing out a single variable $y_m$ yields a Gaussian Copula over the remaining variables $\mathbf{y}_{c \setminus m}$, parameterized by the correlation matrix $R^{c,k}$ with the $m$-th row and column removed.

Therefore, validity relies on \textbf{parameter stability}: the model parameters for the remaining queries $\mathcal{Q}_{c \setminus m}$ must not change when the query point $q_m$ is removed. Specifically, for any two remaining indices $i, j \neq m$, the correlation entry $R_{ij}$ must remain identical.

In \model{}, the correlation matrix entry for channel $c$ and component $k$ is computed as:
\begin{equation*}
    R_{ij} = \frac{\mathbf{v}_i^\top \mathbf{v}_j}{H} + \delta_{ij}(1 - \dots),
\end{equation*}
where $\mathbf{v}_i = \tanh(\text{MLP}(e_i))$. The context embedding $e_i$ is derived via $e_i = \text{Proj}([\phi(t^{\text{qry}}_i) || \tilde{h}_{c}])$ (see \Cref{eq:v,eq:Rdot,eq:R,eq:emb}).

Crucially, the channel context $\tilde{h}_{c}$ is computed solely from the observation history $\mathcal{X}$ via the set-based encoder and is independent of the query set $\mathcal{Q}_c$. Thus, $e_i$ and $\mathbf{v}_i$ depend strictly on the specific query point $q_i=(t_i^\text{qry}, c)$ and $\mathcal{X}$. Since the calculation of $R_{ij}$ for $i,j \neq m$ does not involve the query point $q_m$, removing $q_m$ simply corresponds to removing the $m$-th row and column from $R^{c,k}$. This matches the theoretical marginalization requirement of the Gaussian Copula.
\end{proof}

%% file: appendix/enc_fig.tex
\section{Architecture diagram of \model{}' Encoder}\label{app:enc}

\input{tikz_images/enc.tex}

%% file: tikz_images/enc.tex
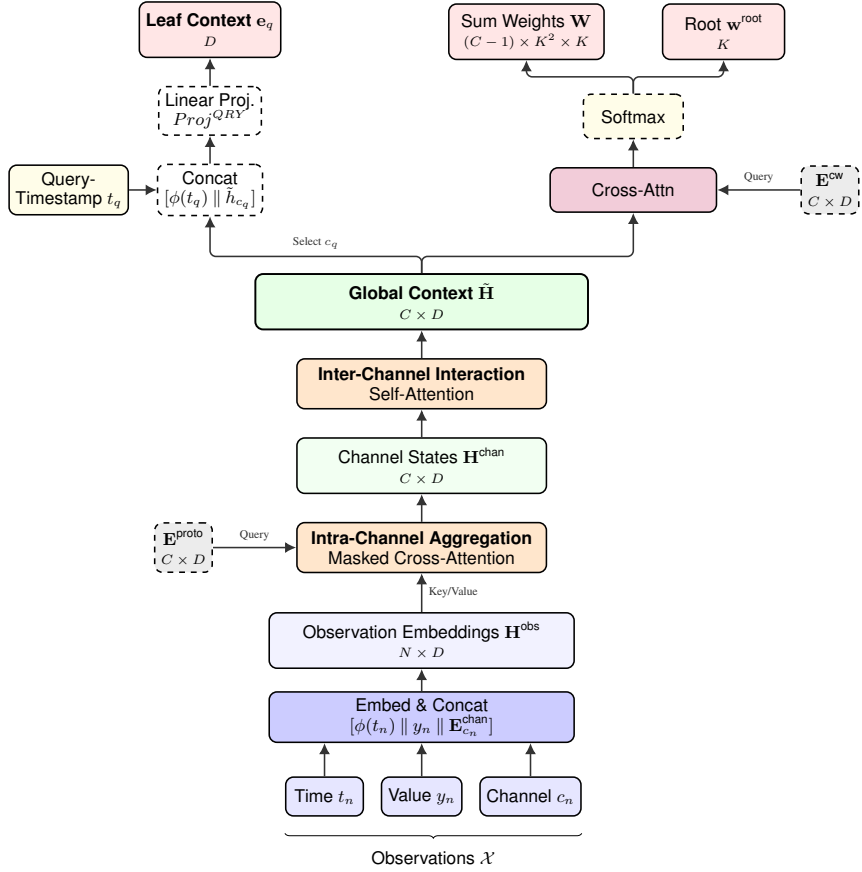
\begin{figure}[h!]
\centering
\resizebox{0.66\columnwidth}{!}{%
\begin{tikzpicture}[
    node distance=0.8cm and 0.8cm,
    font=\sffamily\footnotesize,
    >={Latex[width=2mm,length=2mm]},
    base/.style = {draw, rounded corners, align=center, line width=0.8pt},
    input/.style = {base, fill=blue!10, rectangle, minimum height=0.7cm, minimum width=1.4cm},
    state/.style = {base, fill=green!10, rectangle, minimum width=4.5cm, minimum height=1.0cm},
    output/.style = {base, fill=red!10, rectangle, minimum width=2.5cm, minimum height=1.0cm},
    process/.style = {base, fill=blue!20, rectangle, minimum width=5.5cm, minimum height=0.9cm},
    attn_module/.style = {base, fill=orange!20, rectangle, minimum width=4.5cm, minimum height=0.9cm},
    hyper_module/.style = {base, fill=purple!20, rectangle, minimum width=3.0cm, minimum height=0.8cm},
    op_small/.style = {base, fill=white, rectangle, minimum height=0.8cm, minimum width=1.8cm, dashed},
    param/.style = {base, fill=gray!15, dashed, rectangle, minimum height=0.8cm},
    arrow/.style = {->, thick, rounded corners=3pt, color=black!80}
]

    
    \node[input] (val) {Value $y_n$};
    \node[input, left=0.3cm of val] (time) {Time $t_n$};
    \node[input, right=0.3cm of val] (chan) {Channel $c_n$};
    
    \draw[decorate, decoration={brace, mirror, raise=0.3cm}] (time.south west) -- (chan.south east) node[midway, below=0.5cm] {Observations $\mathcal{X}$};

    \node[process, above=0.6cm of val] (embed) {Embed \& Concat\\ $[\phi(t_n) \mathbin{\|} y_n \mathbin{\|} \mathbf{E}^{\text{chan}}_{c_n}]$};
    
    \draw[arrow] (time) -- (time|-embed.south);
    \draw[arrow] (val) -- (embed);
    \draw[arrow] (chan) -- (chan|-embed.south);

    \node[base, fill=blue!5, above=0.4cm of embed, minimum width=5.5cm, minimum height=1.0cm] (h_obs) {Observation Embeddings $\mathbf{H}^{\text{obs}}$\\ \scriptsize $N \times D$};
    \draw[arrow] (embed) -- (h_obs);

    \node[attn_module, above=0.7cm of h_obs] (intra_attn) {\textbf{Intra-Channel Aggregation}\\Masked Cross-Attention};
    \draw[arrow] (h_obs) -- node[right, font=\tiny] {Key/Value} (intra_attn);
    
    \node[param, left=1.5cm of intra_attn] (proto) {$\mathbf{E}^{\text{proto}}$\\ \scriptsize $C \times D$};
    \draw[arrow] (proto) -- node[above, font=\tiny] {Query} (intra_attn);

    \node[state, above=0.5cm of intra_attn, fill=green!5] (h_chan) {Channel States $\mathbf{H}^{\text{chan}}$\\ \scriptsize $C \times D$};
    \draw[arrow] (intra_attn) -- (h_chan);

    \node[attn_module, above=0.5cm of h_chan] (inter_attn) {\textbf{Inter-Channel Interaction}\\Self-Attention};
    \draw[arrow] (h_chan) -- (inter_attn);

    \node[state, above=0.5cm of inter_attn, line width=1pt, minimum width=6cm] (global_ctx) {\textbf{Global Context} $\tilde{\mathbf{H}}$\\ \scriptsize $C \times D$};
    \draw[arrow] (inter_attn) -- (global_ctx);


    \coordinate[above left=1.5cm and 0.8cm of global_ctx] (left_branch_anchor);
    
    \node[op_small, at=(left_branch_anchor)] (leaf_concat) {Concat\\$[\phi(t_q) \mathbin{\|} \tilde{h}_{c_q}]$};
    
    \node[input, left=0.5cm of leaf_concat, fill=yellow!10] (query_t) {Query- \\Timestamp $t_q$};
    \draw[arrow] (query_t) -- (leaf_concat);
    
    \draw[arrow] (global_ctx.north) -- ++(0,0.3) -| node[pos=0.25, above, font=\tiny] {Select $c_q$} (leaf_concat.south);

    \node[op_small, above=0.5cm of leaf_concat] (leaf_proj) {Linear Proj.\\$Proj^{QRY}$};
    \draw[arrow] (leaf_concat) -- (leaf_proj);

    \node[output, above=0.5cm of leaf_proj] (leaf_out) {\textbf{Leaf Context} $\mathbf{e}_q$\\ \scriptsize $D$};
    \draw[arrow] (leaf_proj) -- (leaf_out);

    \coordinate[above right=1.5cm and 0.8cm of global_ctx] (right_branch_anchor);

    \node[hyper_module, at=(right_branch_anchor)] (cross_attn) {Cross-Attn};
    
    \draw[arrow] (global_ctx.north) -- ++(0,0.3) -| (cross_attn.south);

    \node[param, right=1.5cm of cross_attn] (e_cw) {$\mathbf{E}^{\text{cw}}$\\ \scriptsize $C \times D$};
    \draw[arrow] (e_cw) -- node[above, font=\tiny] {Query} (cross_attn);

    \node[op_small, fill=yellow!10, above=0.5cm of cross_attn] (softmax) {Softmax};
    \draw[arrow] (cross_attn) -- (softmax);

    \node[output, above left=0.6cm and -0.4cm of softmax, minimum width=2.8cm] (w_sum) {Sum Weights $\mathbf{W}$\\ \scriptsize $(C-1) \times K^2 \times K$};
    \node[output, above right=0.6cm and -0.4cm of softmax, minimum width=2.2cm] (w_root) {Root $\mathbf{w}^{\text{root}}$\\ \scriptsize $K$};
    
    \draw[arrow] (softmax.north) -- ++(0,0.3) -| (w_sum.south);
    \draw[arrow] (softmax.north) -- ++(0,0.3) -| (w_root.south);

\end{tikzpicture}%
}
\caption{Detailed architecture of the \model{} Encoder with tensor dimensions. The global context $\tilde{\mathbf{H}}$ (dim $C \times D$) branches to generate leaf contexts (dim $D$) and circuit weights.}
\label{fig:circuits_encoder_dims}
\end{figure}

%% file: appendix/circuit_sampling.tex
\section{Sampling with \model{}' Sum Product Network}\label{app:sampling_spn}
Let $z_{c} \in \{1, \dots, K\}$ denote the index of the active sum node (latent component) for channel $c$. The sampling proceeds as follows:

\textbf{Root Initialization ($c=C$):} We begin by sampling the final latent state $z_{C}$ directly from the distribution defined by the final weights:
\begin{equation}
    z_{C} \sim \text{Categorical}(\wroot)
\end{equation}
    
\textbf{Recursive Backtracking ($c = C, \dots, 2$):} Given the current active state $z_{c}=k$, we must determine the branch (product node) that generated it. The sum node $k$ aggregates over $K^2$ product terms formed by the previous channel state and the current channel leaf. We sample the flattened product index $m \in \{1, \dots, K^2\}$ using the mixing weights $W^{(c)}$ specific to the active node $k$:
\begin{equation}
    m \sim \text{Categorical}(W_{\cdot, k}^{(c)})
\end{equation}
The sampled index $m$ uniquely maps to a pair of indices: the latent state of the previous channel $z_{c-1}$ and the specific mixture component $j_c$ for the current channel's leaf distribution. We update the active state for the next step ($z_{c-1}$) and record the active leaf component index $j_c$ for the current channel.

\textbf{Termination ($c=1$):} At the final step of the backward pass, the remaining active state $z_{1}$ directly indicates the active leaf component $j_1$ for the first channel.

%% file: appendix/parallelizing_circuits.tex
\section{Parallelizing the Circuit Computation}\label{app:parallel_circuit}

The recursive Sum-Product Network (SPN) aggregation described in \Cref{sec:rec_spn} inherently suggests a sequential computation: the joint state $\phi^{(c)}$ depends on $\phi^{(c-1)}$, requiring $O(C)$ sequential steps for $C$ channels. However, by analyzing the algebraic structure of the update rule, we can reformulate the recurrence as a sequence of matrix multiplications in the log-semiring. This reformulation allows us to utilize parallel prefix scan algorithms (associative scans), reducing the time complexity from linear $O(C)$ to logarithmic $O(\log C)$ on parallel hardware (e.g., GPUs).

\subsection{Theoretical Background: Connection to HMMs}

The recursive update in \model{} is isomorphic to the Forward Algorithm in a Hidden Markov Model~\cite{Rabiner1989.Tutorial} (HMM) with a time-varying transition matrix. 

Recall the update rule for the circuit state. Let $\log \phi^{(c-1)} \in \mathbb{R}^K$ be the log-density vector of the joint state after $c-1$ channels. The update for the $k$-th component of the next channel $c$ is given by aggregating over the previous states $i$ and the current leaf mixture components $j$:
\begin{equation*}
    \log \phi_k^{(c)} = \log \sum_{i=1}^K \sum_{j=1}^K \exp \left(\log \phi_i^{(c-1)} + L_{c,j} + \log W_{(i,j), k}^{(c)} \right)
\end{equation*}
where $L_{c,j}$ is the log-likelihood of the $j$-th leaf component for channel $c$, and $W^{(c)}$ describes the mixing weights. We can rearrange the summation to isolate the dependence on the previous state $\phi_i^{(c-1)}$:
\begin{equation*}
   \log \phi_k^{(c)} = \log \sum_{i=1}^K \exp \left(\log \phi_i^{(c-1)} + \underbrace{\log \sum_{j=1}^K \exp \left( L_{c,j} + \log W_{(i,j), k}^{(c)} \right)}_{M_{ik}^{(c)}} \right) 
\end{equation*}
Here, we define a transition matrix $M^{(c)} \in \mathbb{R}^{K \times K}$ for step $c$. The entry $M_{ik}^{(c)}$ represents the log-probability of transitioning from joint component $i$ to component $k$, marginalizing over the leaf distributions of the current channel. 

In vector notation, this becomes a matrix-vector multiplication in the log-semiring (where $(\oplus, \otimes) \to (\text{logsumexp}, +)$):

\begin{equation*}
\phi^{(c)} = \phi^{(c-1)} \otimes_{\log} M^{(c)}
\end{equation*}
Consequently, the final joint distribution weights $\phi^{(C)}$ can be expressed as the product of the initial state $\phi^{(1)}$ and a sequence of transition matrices:
\begin{equation*}
    \phi^{(C)} = \phi^{(1)} \otimes_{\log} M^{(2)} \otimes_{\log} M^{(3)} \dots \otimes_{\log} M^{(C)}
\end{equation*}
This formulation highlights that \model{}, while structurally an SPN, shares the computational characteristics of linear recurrent models and HMMs~\cite{Sarkka2021.Temporal}.

\subsection{Parallel Log-Semiring Scan}

Since matrix multiplication is associative, the cumulative product of the transition matrices $M^{(c)}$ can be computed in parallel using the Hillis-Steele algorithm~\cite{Hillis1986.Data}.

Instead of computing the product sequentially ($((M^{(2)} M^{(3)}) M^{(4)}) \dots$), a parallel scan computes the products in a tree-like structure. For a sequence of length $C$, this requires $\mathcal{O}(\log C)$ sequential steps with sufficient parallel threads.

%% file: appendix/bifurcation_details.tex
\section{Details on the Synthetic Bifurcation Experiment}~\label{app:bifurcation}
For the toy experiment described in \Cref{sec:exp}, we generated IMTS trajectories using 50 evenly spaced time steps. The data points prior to bifurcation are randomly distributed around 0 with a standard deviation of 0.1. After the bifurcation point (at the 18th time step), the channels follow a Brownian motion with a drift of either 0.1 or -0.1 per time step. The standard deviation is still 0.1. This decision is random, independent per channel, and equiprobable for both paths.

The models are tasked with predicting the distribution of the final 75\% of time steps, conditioned on the initial 25\%. Consequently, the observations provided as input contain no information regarding the future drift direction. To simulate irregularity, we randomly dropped 5\% of the observations and query points.

We present samples from ProFITi in \Cref{fig:app_bifurcation}, along with the final normalized joint negative log-likelihood (njNLL) on the test split for ProFITi, MOSES, and \model{}. While ProFITi correctly captures the Brownian motion dynamics and approaches \model{} in modeling the joint likelihood (\Cref{tab:nll_bifurcation}), it fails to learn the strict separation of modalities.
\begin{table}[h!]
\caption{Test njNLL for the synthetic four channel bifurcation task.}\label{tab:nll_bifurcation}
\small
\centering
\begin{tabular}{lcccc}
    \toprule
    & MOSES-10 & MOSES-16 & ProFITi & \model{} \\
    \midrule
    njNLL & -0.340 ±.07 & -0.290 ± .04 & -1.253 ± .01 & \textbf{-1.280 ± .02} \\
    \bottomrule
\end{tabular}
\end{table}

\input{figures/bifurcation_app.tex}

%% file: figures/bifurcation_app.tex
\begin{figure}[h!]
    \centering
    \includegraphics[width=1.5in]{figures/bifurcation/ground_truth.pdf}
    \includegraphics[width=1.5in]{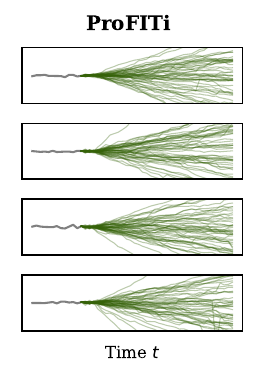}
    \includegraphics[width=1.5in]{figures/bifurcation/circ.pdf}
    \caption{Comparing samples from ProFITi and \model{} for the four channel bifurcation task.}\label{fig:app_bifurcation}
\end{figure}

%% file: appendix/datasets.tex
\section{Dataset Descriptions}~\label{app:datasets}
In coherence with \citet{Yalavarthi2025.Reliable} we utilize the following four benchmark datasets to evaluate performance, covering both climatological and physiological domains. The specific characteristics and preprocessing protocols for each are detailed below:

    \paragraph{USHCN.} Derived from the U.S. Historical Climatology Network~\cite{Menne2006.US}, this dataset tracks climate changes across the USA\@. It consists of daily measurements of 5 variables (snow precipitation, rain precipitation, snow depth, min. temperature, and max temperature) collected over 150 years from 1,218 meteorological stations. Following the protocol of \citet{DeBrouwer2019.GRUODEBayes}, we selected a subset of 1,100 stations and an observation window of 4 years (1996-2000). USHCN is transformed into an IMTS by randomly dropping 95\% recorded observations. 

    \paragraph{PhysioNet2012.} Sourced from the PhysioNet Computing in Cardiology Challenge 2012~\cite{Silva2012.Predicting}, this physiological dataset contains medical records of 12,000 patients admitted to the ICU. It tracks 37 vital signs over a 48-hour period. Consistent with \citet{Che2018.Recurrent}, the dataset is binned into hourly observations by rounding time stamps to the hour and averaging observations, if a channel is observed multiple times within one hour.

    \paragraph{MIMIC-III.} The Medical Information Mart for Intensive Care III~\cite{Johnson2016.MIMICIII} is a large-scale physiological dataset collected at Beth Israel Deaconess Medical Center. It contains 17,874 clinical instances measuring 96 variables over 48 hours. Following \citet{DeBrouwer2019.GRUODEBayes}, observations are aggregated and rounded to 30-minute intervals.

     \paragraph{MIMIC-IV.} This dataset features records from a tertiary academic medical center in Boston. It tracks 102 variables from ICU patients over a 48-hour period~\cite{PhysioNet-mimiciv-2.2,Johnson2023.MIMICIV}. Adopting the standards of \citet{Bilos2021.Neural}, the observations are rounded to 1-minute intervals. Due to the more fine-grained binning, this dataset records the greatest number of observations and queries.

\begin{table}[h]
    \centering
    \caption{Summary of Dataset Characteristics and Preprocessing}\label{tab:datasets}
    \vspace{0.2cm}
    \begin{tabular}{l l c c c c c }
        \toprule
        \textbf{Dataset} &  \textbf{Samples} & \textbf{Channels} & \textbf{Duration} & \textbf{Binning} & \textbf{Max. Observations}\\
        \midrule
        \textbf{USHCN}           &  1,100 (Stations) & 5   & 4 Years  & 1 Week                                   &  325 \\
        \textbf{PhysioNet2012}   &  12,000 (Patients) & 37  & 48 Hours & 1 Hour                                  &  552 \\
        \textbf{MIMIC-III}       &  21,000 (Patients) & 96  & 48 Hours & 30 Mins                                 &  741 \\
        \textbf{MIMIC-IV}       &  18,000 (Patients) & 102 & 48 Hours & 1 Min                                   &  1386 \\
        \bottomrule
    \end{tabular}
\end{table}

\begin{table}[h]
    \centering
    \caption{Minimum, Average, and Maximum Number of Queries (Grouped by Task)}\label{tab:datasets_grouped}
    
    \setlength{\tabcolsep}{1mm} 
    
    \begin{tabular}{l ccc !{\hspace{0.6cm}} ccc !{\hspace{0.6cm}} ccc !{\hspace{0.6cm}} ccc}
        \toprule
        & \multicolumn{3}{c}{\textbf{36-3}} 
        & \multicolumn{3}{c}{\textbf{36-12}} 
        & \multicolumn{3}{c}{\textbf{24-24}} 
        & \multicolumn{3}{c}{\textbf{12-36}} \\
        
        \cmidrule(lr){2-4} \cmidrule(lr){5-7} \cmidrule(lr){8-10} \cmidrule(lr){11-13}
        
        \textbf{Dataset} & {min.} & avg. & max. 
                         & {min.} & avg. & max. 
                         & {min.} & avg. & max. 
                         & {min.} & avg. & max. \\
        \midrule
        \textbf{USHCN}         
        & 3 & 3.3 & 6    
        & 5 & 54.7 & 63   
        & 19 & 109.4 & 120  
        & 19 & 164.1 & 179 \\
        \textbf{PhysioNet2012} 
        & 1 & 19.8 & 53  
        & 1 & 61.8 & 148  
        & 1 & 139.9 & 297  
        & 1 & 220.7 & 459 \\
        \textbf{MIMIC-III}     
        & 1 & 10.3 & 85  
        & 1 & 25.3 & 264  
        & 1 & 67.2 & 512  
        & 1 & 109.7 & 785 \\
        \textbf{MIMIC-IV}      
        & 1 & 7.8 & 79   
        & 1 & 57.2 & 481  
        & 1 & 128.9 & 906  
        & 1 & 208.4 & 1357 \\
        \bottomrule
    \end{tabular}
\end{table}

%% file: appendix/hyperparameters.tex
\section{Hyperparameters}~\label{app:hyper}
Each model is trained for a maximum of 2000 epochs using early stopping with patience of 30 epochs. We use the AdamW~\cite{Kingma2017.Adam,loshchilov2019decoupledweightdecayregularization} optimizer with an initial learning rate of 0.001, an L2 regularization of 0.001 and a batch size of 64. Additionally, we use a learning-decay-on-plateau scheduler with a decay rate of 0.5 and 5 epochs patience. For each model we evaluate a maximum of 10 randomly searched hyperparameter setting. We train each setting on single fold and select the setting with the best validation njNLL.\@ We follow \citeauthor{Yalavarthi2025.Reliable} and select the following hyperparameter search spaces for the evaluated models:
\paragraph{MOSES.}For the MOSES model~\cite{Yalavarthi2025.Reliable}, we perform a search over the number of \textbf{mixture components} $\in \{1, 2, 5, 7, 10\}$. Additionally, we evaluate different sizes for the \textbf{hidden dimension} $\in \{16, 32, 64, 128\}$ and vary the number of \textbf{attention heads} $\in \{1, 2, 4\}$.
\paragraph{ProFITi.} For ProFITi~\cite{Yalavarthi2025.Probabilistic}, we evaluate the number of \textbf{flow layers} $\in \{7, 8, 9, 10\}$ and the \textbf{latent dimension size} $\in \{32, 64, 128, 256\}$.
\paragraph{Neural Flows.} For the Neural Flows model~\cite{Bilos2021.Neural}, we primarily search over the number of flow layers $\in \{1, 4\}$. We fix the \textbf{hidden dimension} to $64$, and the number of \textbf{hidden layers} is $2$, and the \textbf{flow model} is set to GRU.

\paragraph{\model{}.} For our model, we vary the number of \textbf{(components $K$)} $\in \{2, 3, 4\}$ and the \textbf{hidden dimension $D^\prime$} $\in \{32, 64\}$. For the Deep sigmoidal flow on the univariate marginals we tune the number of \textbf{flow layers} $ \in \{2,3\}$, and the \textbf{flow hidden dimension} $\in \{10,20\}$. The MLP that infers the parameters has 2 hidden layers and a hidden dimension of 32. The multi-head attention layers in the encoder are applied by setting the number of \textbf{attention heads} to $2$. For \model{} we had to set the batch size to 32 for the MIMIC-IV due to memory overflow.

The remaining hyperparameters of the baseline models are taken from the respective papers.

%% file: appendix/abl_details.tex
\section{Details on Ablation Studies}~\label{app:details_abl}
We provide additional information for the ablation studies presented in \Cref{tab:abl_spn}. 
\paragraph{w/o SPN.} We conduct this experiment by training a model where the number of leaf components $K$ is set to 1. 
Consequently, there is no weighted sum of different components; instead, channels are combined via a single product distribution.

\paragraph{w/o DSF.} We replace the Deep Sigmoidal Flow (DSF), which is applied to the univariate marginals, with 
normal distributions parameterized by ($\mu, \sigma$) that are inferred from the embeddings of the forecasting queries.
Formally:
\begin{align}
    \mu^k_i    &= f^{\text{mean}}(e^k_i) \\
    \sigma^k_i &= \exp(f^{\text{var}}(e^k_i)), 
\end{align}
where $f^{\text{mean}}$ and $f^{\text{var}}$ are implemented as 2-layer MLPs. We do not change the computation of the correlation matrix $R$. 

\paragraph{w/o GC.} For this ablation, we set the correlation matrix $R=I$, thereby treating the univariate marginals of a single component $k$ as independent.

\paragraph{w/ TF.} We utilize Tripletformer (TF)~\cite{Yalavarthi2023.Tripletformer} 
as an encoder to compute the circuit weights and the embeddings of the forecasting queries. 
In contrast to the encoder in \model{}, Tripletformer does not group observations/queries by channel. 
Instead, it first performs self-attention among all observation triplets (channel, timestamp, value)
and subsequently applies cross-attention from the encoded observation triplets to the forecasting queries. 

To infer the weights of the SPN, we include an additional Cross-Attention layer with learnable circuit weight embeddings $\mathbf{E}^\text{cw}$ (similar to \Cref{eq:cw_crossattn}, but using all encoded observation triplets as attention keys and values).

%% file: appendix/additional_exp.tex
\section{Additional Evaluation}
\subsection{More Baselines and MSE of Means}
In addition to the baselines presented in \Cref{sec:exp}, we compare \model{} to some additional baselines.
Similar to NeuralFlows~\cite{Bilos2021.Neural}, Continuous Recurrent Units (CRU)~\cite{Schirmer2022.Modeling} and GRU-ODE-Bayes~\cite{DeBrouwer2019.GRUODEBayes}
learn to forecasting univariate marginals of an IMTS\@.

Furthermore, we compare with Gaussian Process Regression~\cite{Durichen2015.Multitask} and Gaussian Mixture Model~\cite{Yalavarthi2025.Reliable}, which predicts a mixture of multivariate Gaussian in the same way that MOSES~\cite{Yalavarthi2025.Reliable}, however without applying normalizing flows.
The njNLL and mnLL are shown in \Cref{tab:additional_nll} and \Cref{tab:mse} shows the Mean Squared Error (MSE) of the empirical means calculated with 1000 samples.
\input{tables/add_base_lines.tex}
\input{tables/mse.tex}

\subsection{Assessing the Importance of Channel Ordering}\label{app:order}
To quantify the importance of channel ordering, we repeated the Physionet-12 evaluation (36-to-12 channels) using five randomly sampled channel permutations. The resulting test njNLL values are provided in \Cref{tab:perm}. While the results indicate that ordering has a statistically discernible impact, the variation is negligible relative to the performance gaps between the competing models. Consequently, the choice of permutation does not alter the qualitative conclusions of the experiment.
\begin{table*}[h!]\label{tab:perm}
    \centering
    \caption{Permuting the channels order in the recursive SPN aggregation for the Physionet'12 36--12 task.}
    \begin{tabular}{lcccccc}
        \toprule
        & Original & Perm. 1 & Perm. 2 & Perm. 3 & Perm. 4 \\
        \midrule
        njNLL & -0.536 ± .04 & -0.540 ± .02 & -0.574 ± .03 & -0.526 ± .02 & -0.547 ± .03 \\
        \bottomrule
    \end{tabular}
\end{table*}

\newpage

\subsection{Energy Score and CRPS}
To evaluate the quality of samples produced by the learned distributions, we employ the Energy Score and the Continuous Ranked Probability Score (CRPS). Since closed-form solutions are often intractable for complex joint distributions modeled by \model{} and the baselines, we approximate these metrics using Monte Carlo estimation with $S=1000$ samples.

\paragraph{Energy Score.} The Energy Score is a multivariate strictly proper scoring rule that assesses the quality of the joint distribution, capturing correlations between variables in the IMTS.\@ For a ground truth vector $y \in \mathbb{R}^M$ corresponding to the query $\mathcal{Q}$, and $S$ samples, the empirical Energy Score is defined as:
\begin{equation}
    ES(y, \mathcal{Q}) = \frac{1}{S} \sum_{s=1}^{S} \| \hat{y}^{(s)} - y \|_2 - \frac{1}{2S^2} \sum_{s=1}^{S} \sum_{j=1}^{S} \| \hat{y}^{(s)} - \hat{y}^{(j)} \|_2
\end{equation}
where $\| \cdot \|_2$ denotes the Euclidean norm. The first term measures the distance between the samples and the ground truth (calibration), while the second term measures the diversity of the samples (sharpness).

\paragraph{Continuous Ranked Probability Score (CRPS).} While the Energy Score evaluates the joint distribution, the CRPS evaluates the accuracy of the univariate marginal distributions. We report the mean CRPS averaged over all query targets in $\mathcal{Q}$. The sample-based approximation for the CRPS is given by:
\begin{equation}
    CRPS(y, \mathcal{Q}) = \frac{1}{|\mathcal{Q}|} \sum_{i=1}^{|\mathcal{Q}|} \left( \frac{1}{S} \sum_{s=1}^{S} | \hat{y}_i^{(s)} - y_i | - \frac{1}{2S^2} \sum_{s=1}^{S} \sum_{j=1}^{S} | \hat{y}_i^{(s)} - \hat{y}_i^{(j)} | \right)
\end{equation}
where $y_i$ is the ground truth value for the $i$-th query target and $\hat{y}_i^{(s)}$ is the corresponding value in the $s$-th sample. \Cref{tab:energy_crps} compares these scores for NeuralFlows, ProFITi, MOSES, and \model{}.

\input{tables/energy_crps.tex}

\newpage

\subsection{Sensitivity Analysis for the Number of Components}
We evaluated the sensitivity of \model{} to the hyperparameter $K$, representing the number of leaf nodes per channel (mixture components). 
As shown in \Cref{fig:k_sensitivity}, the model exhibits moderate robustness to variations in $K$. 
Our results suggest that selecting $K$ from the set $\{2, 3, 4\}$ is sufficient, as the optimal configuration was found to be $K=2$ or $K=3$ for the evaluated task (36--12).
\input{tikz_images/k_sensitivity.tex}

%% file: tables/add_base_lines.tex
\begin{table*}[h]
\centering
\caption{Comparison of njNLL and mNLL on the 36-3$^\star$ task. Lower values are better. Best results are in \textbf{bold}, second best in \textit{italics}. Results are taken from~\cite{Yalavarthi2025.Reliable}. OOM stands for out-of-memory.}\label{tab:additional_nll}.
\setlength{\tabcolsep}{1.5mm} 
\footnotesize
\begin{tabular}{lrrrrrrrr}
    \toprule
    Model & \multicolumn{2}{c}{USHCN} & \multicolumn{2}{c}{PhysioNet'12} & \multicolumn{2}{c}{MIMIC-III} & \multicolumn{2}{c}{MIMIC-IV} \\
    \cmidrule(lr){2-3} \cmidrule(lr){4-5} \cmidrule(lr){6-7} \cmidrule(lr){8-9}
    & \multicolumn{1}{c}{njNLL} & \multicolumn{1}{c}{mNLL} & \multicolumn{1}{c}{njNLL} & \multicolumn{1}{c}{mNLL} & \multicolumn{1}{c}{njNLL} & \multicolumn{1}{c}{mNLL} & \multicolumn{1}{c}{njNLL} & \multicolumn{1}{c}{mNLL} \\
    \midrule
    ProFITi & -3.226 ± .23 & \textit{-3.324 ± .21} & \textbf{-0.647 ± .08} & \textit{-0.016 ± .09} & \textit{-0.377 ± .03} & 0.408 ± .03 & \textit{-1.777 ± .07} & \textit{0.500 ± .32} \\
    GRU-ODE & 0.766 ± .16 & 0.776 ± .17 & 0.501 ± .00 & 0.504 ± .06 & 0.961 ± .06 & 0.839 ± .03 & 0.823 ± .32 & 0.876 ± .59 \\
    NeuralFlows & 0.775 ± .15 & 0.775 ± .18 & 0.496 ± .00 & 0.492 ± .03 & 0.998 ± .18 & 0.866 ± .10 & 0.689 ± .09 & 0.796 ± .05 \\
    CRU & 0.761 ± .19 & 0.762 ± .18 & 1.057 ± .01 & 0.931 ± .02 & 1.234 ± .08 & 1.209 ± .04 & OOM & OOM \\
    GPR &  2.011 ± 1.4  & 1.235 ± .10 & 1.367 ± .07& 1.161 ± .07 & 3.146 ± .36 & 1.341 ± .01 & 2.789 ± .05 & 1.161 ± .01 \\
    GMM & 1.050 ± .03 & 1.042 ± .02 & 1.063 ± .00 & 1.069 ± .00 & 1.160 ± .02 & 1.124 ± .01 & 1.076 ± .00 & 1.075 ± .01 \\
    MOSES & \textit{-3.357 ± .18} & -3.355 ± .16 & -0.491 ± .04 & -0.271 ± .03 & -0.305 ± .03 & \textit{0.163 ± .03} & -1.668 ± .10 & -0.634 ± .02 \\
    \midrule
    \model{} & \textbf{-3.789 ± .22} & \textbf{-3.717 ± .20} & \textit{-0.550 ± .01} & \textbf{-0.287 ± .04} & \textbf{-0.574 ± .08} & \textbf{0.095 ± .11} & \textbf{-2.113 ± .04} & \textbf{-0.731 ± .06} \\
    \bottomrule
\end{tabular}
\end{table*}

%% file: tables/mse.tex
\begin{table}[h!]
\centering
\caption{Comparing the MSE of the mean estimated by 1000 samples from probabilistic models on the 36-3$^{\star}$ setting.}\label{tab:mse}
\small
\setlength{\tabcolsep}{5mm} 
\begin{tabular}{lcccc}
    \toprule
    Model & USHCN & PhysioNet’12 & MIMIC-III & MIMIC-IV \\
    \midrule
    ProFITi          & 0.308 ± .06 & 0.305 ± .01 & 0.548 ± .06 & 0.389 ± .02  \\
    GRU-ODE          & 0.410 ± .11 & 0.329 ± .00 & \textit{0.479 ± .04} & 0.365 ± .01 \\
    NeuralFlows     & 0.424 ± .11 & 0.331 ± .01 & \textit{0.479 ± .05} & 0.374 ± .02 \\
    CRU              & \textbf{0.290 ± .06} & 0.475 ± .02 & 0.725 ± .04 & OOM \\
    GPR              & 0.597 ± .11 & 0.575 ± .06 & 0.862 ± .02 & 0.609 ± .01 \\
    GMM              & \textit{0.294 ± .08} & \textit{0.293 ± .01} & 0.535 ± .06 & \textit{0.332 ± .02} \\
    MOSES            & 0.411 ± .10 & 0.307 ± .01 & 0.517 ± .06 & 0.342 ± .03 \\
    \model{} & 0.306 ± .03 & \textbf{0.292 ± .00} & \textbf{0.475 ± .05} & \textbf{0.285 ± .00} \\
    \bottomrule
\end{tabular}
\end{table}

%% file: tables/energy_crps.tex
\begin{table*}[h!]
\centering
\caption{Energy Score and CRPS of competing models. Lower values are better. Best results are in \textbf{bold}, second best in \textit{italics}. Models are trained and tuned on njNLL.}\label{tab:energy_crps}
\setlength{\tabcolsep}{1.3mm} 
\footnotesize
\begin{tabular}{c|lcccccccc}
    \toprule
    & Model &  \multicolumn{2}{c}{USHCN} & \multicolumn{2}{c}{PhysioNet'12} & \multicolumn{2}{c}{MIMIC-III} & \multicolumn{2}{c}{MIMIC-IV} \\
    \cmidrule(lr){3-4} \cmidrule(lr){5-6} \cmidrule(lr){7-8} \cmidrule(lr){9-10}
    & & Energy & CRPS & Energy & CRPS & Energy & CRPS & Energy & CRPS \\
    \midrule
    \multirow{4}{*}{\rotatebox{90}{36--3$^\star$}} 
    & NeuralFlows & 0.661 ± .06 & 0.306 ± .03 & 1.691 ± .00 & 0.277 ± .00 & \textit{1.381 ± .03} & 0.308 ± .00 & 0.982 ± .01 & 0.281 ± .00 \\
    & ProFITi     & \textbf{0.452 ± .04} & \textbf{0.182 ± .01} & \textbf{0.879 ± .30} & 0.271 ± .00 & 1.606 ± .17 & 0.319 ± .00 & \textbf{0.808 ± .00} & {0.279 ± .01} \\
    & MOSES       & 0.552 ± .04          &         0.220 ± .02  & \textit{1.599 ± .01} & \textit{0.260 ± .00} & \textbf{1.353 ± .03} & \textit{0.296 ± .01} & \textit{0.906 ± .03} & \textit{0.245 ± .01} \\
    & \model{}    & \textit{0.468 ± .02} & \textbf{0.182 ± .01} &        {1.613 ± .00} & \textbf{0.252 ± .00} & {1.489 ± .04} & \textbf{0.286 ± .01} & {0.927 ± .05} & \textbf{0.221 ± .00} \\
    \midrule
    \multirow{4}{*}{\rotatebox{90}{36--12}} 
    & NeuralFlows & 5.943 ± .39 & 0.428 ± .02 & 6.107 ± .18 & 0.320 ± .00 & 8.107 ± .83 & 0.384 ± .02 & 12.222 ± .39 & 0.318 ± .01 \\
    & ProFITi     & \textit{4.385 ± 1.7} & \textit{0.234 ± .04} & 3.575 ± .45 & 0.312 ± .03 & 2.546 ± .13 & 0.383 ± .03 & 5.394 ± 155 & 0.654 ± .08 \\
    & MOSES       & \textbf{3.252 ± .13} & \textbf{0.211 ± .01} & \textbf{3.195 ± .04} & \textbf{0.285 ± .00} & \textit{2.358 ± .10} & \textit{0.350 ± .01} & \textit{2.869 ± .04} & \textit{0.262 ± .01} \\
    & \model{}    & 5.535 ± .93 & 0.359 ± .06 & \textit{3.213 ± .00} & \textit{0.286 ± .00} & \textbf{2.279 ± .01} & \textbf{0.327  ± .01} & \textbf{2.835 ± .09} & \textbf{0.249 ± .01} \\
    \midrule
    \multirow{4}{*}{\rotatebox{90}{24--24}} 
    & NeuralFlows & \textit{8.448 ± .66} & 0.433 ± .02 & 9.522 ± .44 & 0.352 ± .00 & 10.290 ± .35 & 0.407 ± .01 & 16.836 ± .71 & 0.333 ± .01 \\
    & ProFITi     & 14.42 ± 7.6 & \textit{0.355 ± .06} & 5.742 ± .28 & 0.336 ± .01 & 4.287 ± .38 & 0.438 ± .07 & 10.82 ± 1.1 & 1.132 ± .17 \\
    & MOSES       & \textbf{5.330 ± .31} & \textbf{0.239 ± .01} & \textit{5.284 ± .03} & \textit{0.318 ± .00} & \textit{3.632 ± .05} & \textit{0.333 ± .00} & \textbf{4.268 ± .07} & \textbf{0.279 ± .01} \\
    & \model{}    & 10.71 ± 1.9 & 0.513 ± .11 & \textbf{5.267 ± .01} & \textbf{0.315 ± .00} & \textbf{3.543 ± .04} & \textbf{0.323 ± .01} & \textit{4.482 ± .15} & \textit{0.281 ± .01} \\
    \midrule
    \multirow{4}{*}{\rotatebox{90}{12--36}} 
    & NeuralFlows & \textbf{10.79 ± 1.1} & \textbf{0.437 ± .02} & 12.170 ± .41 & 0.396 ± .01 & 14.524 ± .95 & 0.477 ± .00 & 23.049 ± .97 & 0.356 ± .00 \\
    & ProFITi     & 41.61 ± 39 & 0.593 ± .36 & 8.439 ± .42 & 0.408 ± .03 & 6.567 ± .53 & 2.413 ± 1.0 & 11.10 ± .91 & 0.823 ± .12 \\
    & MOSES       & 21.22 ± 17 & 1.198 ± 1.15 & \textit{7.685 ± .63} & \textit{0.379 ± .04} & \textit{5.770 ± .03} & \textit{0.418 ± .00} & \textit{6.276 ± .03} & \textit{0.332 ± .00} \\
    & \model{}      & \textit{12.18 ± 1.8} & \textit{0.456 ± .07} & \textbf{7.241 ± .01} & \textbf{0.355 ± .00} & \textbf{5.730 ± .06} & \textbf{0.409 ± .00} & \textbf{5.851 ± .05} & \textbf{0.292 ± .00} \\
    \bottomrule
\end{tabular}
\end{table*}

%% file: tikz_images/k_sensitivity.tex
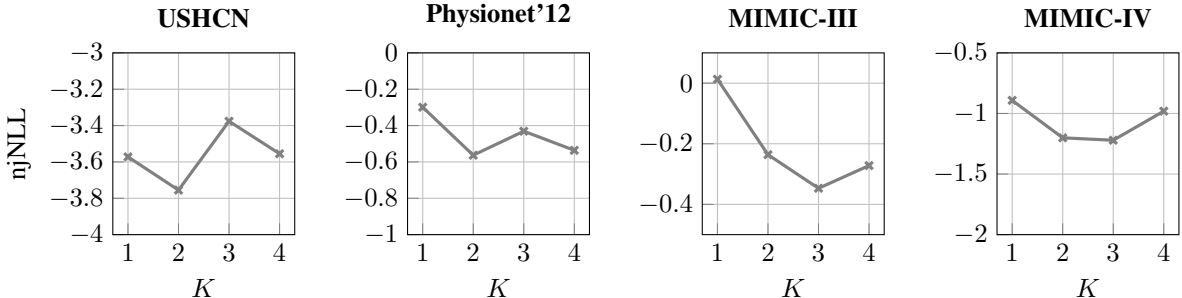
\begin{figure}[h!]
    \centering
    \begin{tikzpicture}
        \begin{groupplot}[
                group style={
                    group size=4 by 1,  
                    horizontal sep=1.5cm, 
                    vertical sep=1cm
                },
                width=4cm, height=4cm, 
                xlabel={$K$},
                grid=major,
                xtick={1,2,3,4},
                tick pos=left,
                yticklabel style={
                    /pgf/number format/fixed,
                    /pgf/number format/precision=3
                }
            ]

            \nextgroupplot[title=\textbf{USHCN}, ylabel={njNLL},ymin=-4, ymax=-3]
            \addplot[color=gray,mark=x,very thick] coordinates {
                (1, -3.572)
                (2, -3.755)
                (3, -3.376)
                (4, -3.555)
            };

            \nextgroupplot[title=\textbf{Physionet'12},ymin=-1, ymax=0]
            \addplot[color=gray,mark=x,very thick] coordinates {
                (1, -0.299)
                (2, -0.563)
                (3, -0.431)
                (4, -0.536)
            };

            \nextgroupplot[title=\textbf{MIMIC-III},,ymin=-.5, ymax=0.1]
            \addplot[color=gray,mark=x,very thick] coordinates {
                (1, 0.013)
                (2, -0.236)
                (3, -0.347)
                (4, -0.272)
            };

            \nextgroupplot[title=\textbf{MIMIC-IV}, ,ymin=-2, ymax=-0.5]
            \addplot[color=gray,mark=x,very thick] coordinates {
                (1, -0.892)
                (2, -1.201)
                (3, -1.221)
                (4, -0.981)
            };

        \end{groupplot}
    \end{tikzpicture}
    \caption{Sensitivity Analysis on the 36-12 task for the number of circuit components $K$.}\label{fig:k_sensitivity}
\end{figure}

%% file: appendix/scaling.tex
\section{Memory Consumption and Epoch Times}\label{app:effi}
We conduct an experiment to compare \model{}' memory consumption during training and epoch times with those of ProFITi and MOSES.\@ 
To do this, we use toy datasets based on Brownian motion, in which we can vary the number of channels $C$ and the number of time steps $N$. The models are tasked with predicting the second half of time steps based on the first half, similar to the 24--24 task in our main experiment (\Cref{sec:exp}). To simulate an IMTS, we randomly drop 10\% of all observations and forecasting queries. Hence, the expected number of query points $|\mathcal{Q}|$ is $\frac{0.9NC}{2}$. 
\input{figures/effi.tex}

For MOSES, we set the number of mixture components to 3, the hidden dimension to 128, and the number of attention heads to 4. ProFITi is applied using 7 flow layers and a latent dimension of 64. For \model{}, we set the number of components $K$ to 3, the number of flow layers to 2, and the hidden dimension to 32.
We run these experiments on a single NVIDIA A40 with a batch size of 64.
\Cref{fig:effi} presents the results. In this toy setup, where the channels all have a similar number of queries, \model{} is significantly faster than MOSES and ProFITi, while using less memory to compute the gradients and update the weights.        

As stated in \Cref{sec:complexity} \model{} reduces the complexity compared to a model that computes full covariance over all query points by factor $C^2$ under the assumption that the forecasting targets are distributed evenly among channels. 

However, in real world IMTS datasets this is typically not the case and some channels are observed more frequently than others. At least in our implementation the speed-up that our model generates by grouping the forecasting targets by channels decreases, when forecasting queries are concentrating on single channels. As an example we show the distribution of queries over channels for MIMIC-III in the 12--36 setting in \Cref{fig:mimic-distribtion}. For such uneven IMTS \model{} is actually slower than MOSES and ProFITi as we show in \Cref{tab:mimiciiiet}.

\begin{table}[h!]
    \caption{Epoch times on MIMIC-III.\@ The Experiment were run on a single NVIDIA A100 GPU}\label{tab:mimiciiiet}
    \centering
    \begin{tabular}{lrrr}
        \toprule
        Model & 36--12 & 24--24 & 12--36 \\
        \midrule 
        ProFITi &  13s & 14s  &  15s  \\
        MOSES   &  13s & 12s  &  13s  \\
        \model{} & 27s & 24s  &  40s  \\
        \bottomrule        
    \end{tabular}
\end{table}

\begin{figure}[h!]
    \centering
    \includegraphics[width=\textwidth]{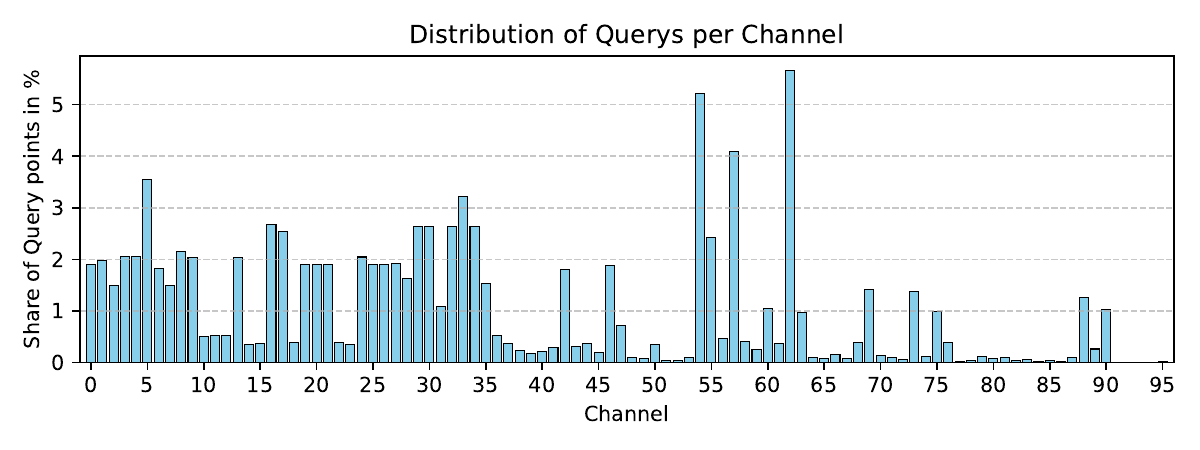}
    \caption{Query distribution for MIMIC-III 12--36}\label{fig:mimic-distribtion}
\end{figure}

%% file: figures/effi.tex
\begin{figure}[h!]
    \centering
    \begin{tikzpicture}
        \begin{groupplot}[
            group style={
                group size=2 by 2,
                horizontal sep=1.5cm,
                vertical sep=1cm,
                x descriptions at=edge bottom,
                y descriptions at=edge left,
            },
            width=6cm,
            height=4.3cm,
            grid=major,
            legend style={
                at={(0.5,-0.35)}, 
                anchor=north, 
                legend columns=-1,
                /tikz/every even column/.append style={column sep=0.5cm}
            },
            cycle list name=color list
        ]

        \nextgroupplot[
            title={\textbf{Scaling with Channels}},
            ylabel={Epoch Time (s)},
            xlabel={Channels ($C$)} 
        ]
        \addplot[color=moses_color, mark=square*] coordinates {
            (10, 11.2) (20, 25.2) (40, 73.3)
        };
        \addplot[color=circuits_color, mark=triangle*] coordinates {
            (10, 4.4) (20, 8.4) (40, 23.4)
        };
        \addplot[color=profiti_color, mark=*] coordinates {
            (10, 8.8) (20, 23.4) (40, 76.8)
        };

        \nextgroupplot[
            title={\textbf{Scaling with Time Steps}},
            xlabel={Time Steps ($N$)},
            legend to name=CommonLegend 
        ]
        \addplot[color=moses_color, mark=square*] coordinates {
            (25, 11.2) (50, 25.2) (100, 74.3)
        };
        \addlegendentry{MOSES}
        \addplot[color=circuits_color, mark=triangle*] coordinates {
            (25, 4.9) (50, 8.4) (100, 16.9)
        };
        \addlegendentry{CircuITS}
        \addplot[color=profiti_color, mark=*] coordinates {
            (25, 8.6) (50, 23.4) (100, 78.55)
        };
        \addlegendentry{ProFITi}

        \nextgroupplot[
            ylabel={Memory (GB)},
            xlabel={Channels ($C$)}
        ]
        \addplot[color=moses_color, mark=square*] coordinates {
            (10, 2.05) (20, 5.4) (40, 16.1)
        };
        \addplot[color=circuits_color, mark=triangle*] coordinates {
            (10, 0.6) (20, 1.9) (40, 6.3)
        };
        \addplot[color=profiti_color, mark=*] coordinates {
            (10, 2.1) (20, 6.2) (40, 19.9)
        };

        \nextgroupplot[
            xlabel={Time Steps ($N$)}
        ]
        \addplot[color=moses_color, mark=square*] coordinates {
            (25, 2.1) (50, 5.4) (100, 16.1)
        };
        \addplot[color=circuits_color, mark=triangle*] coordinates {
            (25, 0.9) (50, 1.9) (100, 4.3)
        };
        \addplot[color=profiti_color, mark=*] coordinates {
            (25, 2.1) (50, 6.2) (100, 20.2)
        };

        \end{groupplot}
        
        \node[yshift=-1.5cm] at ($(group c1r2.south)!0.5!(group c2r2.south)$) {\ref{CommonLegend}};

    \end{tikzpicture}
    \caption{Comparing memory consumption and epoch times during training on Toy datasets.}\label{fig:effi}
\end{figure}